\theoremstyle{plain}
\newtheorem{remark}{Remark}
\newtheorem{definition}{Definition}
\newtheorem*{example}{Example}
\newcommand{\edit}[1]{\textcolor{black}{#1}}
\xpatchcmd{\algorithmic}{\itemsep\z@ \labelsep 1em}%
  {\itemsep\z@ \labelsep 0.5em}{}{}
\algrenewcommand\ALG@beginalgorithmic{\small}
\algrenewcommand{\algorithmicrequire}{\textbf{Input:}}
\algrenewcommand{\algorithmicensure}{\textbf{Output:}}
\newcommand\numberthis{\addtocounter{equation}{1}\tag{\theequation}}
\newcommand*{\Scale}[2][4]{\scalebox{#1}{$#2$}}%
\newcommand{\spaced}[1]{{ \setlength{\jot}{8pt} 
#1}}
\newcommand{\limdp}{\lim\limits_{\dpp\to 0}}
\newcommand{\altfrac}[2]{\ifmmode\def\tmp{$}\else\def\tmp{}\fi\mbox{%
    {\raisebox{.24\ht\strutbox}{\tmp#1\tmp}}%
    \kern-2.2pt\scalebox{1.6}[1.5]{/}\kern-1.8pt%
    {\tmp#2\tmp}%
    }}
\def\clap#1{\hbox to 0pt{\hss#1\hss}}
\def\mathclap{\mathpalette\mathclapinternal}
\def\mathclapinternal#1#2{%
\clap{$\mathsurround=0pt#1{#2}$}%
}
\DeclareMathOperator*{\argmax}{argmax} 
\DeclareMathOperator*{\argmin}{argmin} 
\newcommand{\Pxt}{p_{XT}}
\newcommand{\dif}[1]{\text{d}{#1}}
\newcommand{\dP}{\dif{P}}
\newcommand{\dpp}{\varepsilon}
\newcommand{\MECB}{\mathcal{I}_\text{MEC-B}}
\newcommand{\MEC}{\mathcal{I}_\text{MEC}}
\newcommand{\COMP}{\mathcal{I}_\text{EBIM}}
\newcommand{\obj}{\COMP}
\newcommand{\xsum}{\textstyle\sum\limits}
\newcommand{\ddsum}{\sum\limits_{x \,} \hspace{2pt} \sum\limits_{\scriptscriptstyle \mathclap{t \neq g(x)}} \; }
\newcommand{\doublesum}[2]{\sum\limits_{#1\,} \hspace{2pt} \sum\limits_{\scriptscriptstyle \mathclap{#2}} \; }
\newcommand{\DeltaHsymbol}{\Delta_H}
\newcommand{\DeltaH}[2]{\DeltaHsymbol\left(#1 \, , \ #2\right)}
\newcommand{\Pl}[1]{p_{l}^{(#1)}}
\newcommand{\Ps}[1]{p_{s}^{(#1)}}
\newcommand{\Il}[1]{I_{l}^{(#1)}}
\newcommand{\Is}[1]{I_{s}^{(#1)}}
\newcommand{\rvect}[1]{{\setlength\arraycolsep{3pt} \renewcommand\arraystretch{1} \begin{bmatrix} #1 \end{bmatrix}}}
\newcommand{\ent}[1]{H\left( #1 \right)}
\newcommand{\cmt}[1]{\triangleright \ \Scale[0.8]{#1}}
\newcommand{\abar}{\overline{\alpha}}
\newcommand{\eqtcb}[2]{
\begin{center}
\begin{tcolorbox}[
    title=#1, 
    title filled, width=\linewidth, boxrule=.8pt, colback=white,
    top=1pt, bottom=1pt,
    ]
    #2
\end{tcolorbox}
\end{center}
}
\newcommand{\bea}{\begin{eqnarray}} 
\newcommand{\eea}{\end{eqnarray}}
\newtheorem{critere}{Criterion}
\def\reals{\mathbb{R}} 
\def\ex{\mathbb{E}}
\def\enc{p}
\def\dec{q}
\title{Minimum Entropy Coupling with Bottleneck}
\author{%
  M.Reza Ebrahimi \\
  University of Toronto\\
  {\small\texttt{mr.ebrahimi@mail.utoronto.ca}} \\
  \And
  Jun Chen \\
  McMaster University \\
  {\small\texttt{chenjun@mcmaster.ca}} \\
  \And
  Ashish Khisti \\
  University of Toronto \\
  {\small\texttt{akhisti@ece.utoronto.ca}} \\
}
\begin{document}

\maketitle

\begin{abstract}
    This paper investigates a novel lossy compression framework operating under logarithmic loss, designed to handle situations where the reconstruction distribution diverges from the source distribution. This framework is especially relevant for applications that require joint compression and retrieval, and in scenarios involving distributional shifts due to processing. We show that the proposed formulation extends the classical minimum entropy coupling framework by integrating a bottleneck, allowing for a controlled degree of stochasticity in the coupling.
    We explore the decomposition of the Minimum Entropy Coupling with Bottleneck (MEC-B) into two distinct optimization problems: Entropy-Bounded Information Maximization (EBIM) for the encoder, and Minimum Entropy Coupling (MEC) for the decoder. Through extensive analysis, we provide a greedy algorithm for EBIM with guaranteed performance, and characterize the optimal solution near functional mappings, yielding significant theoretical insights into the structural complexity of this problem.
    Furthermore, we illustrate the practical application of MEC-B through experiments in Markov Coding Games (MCGs) under rate limits. These games simulate a communication scenario within a Markov Decision Process, where an agent must transmit a compressed message from a sender to a receiver through its actions. Our experiments highlight the trade-offs between MDP rewards and receiver accuracy across various compression rates, showcasing the efficacy of our method compared to conventional compression baseline.
\end{abstract}

\section{Introduction}\label{sec:introduction}

Consider the following Markov Chain modeling a general lossy compression framework \mbox{$X \xrightarrow{\enc_{T|X}} T \xrightarrow{\dec_{Y|T}} Y$}, where the input $X$ with a marginal distribution $p_X$, is encoded by the probabilistic encoder $\enc$ to generate the code $T$. Subsequently, the probabilistic decoder $\dec$ reconstructs $Y$ from $T$. The objective is to identify the encoder and decoder that minimize the distortion between $X$ and $Y$, \edit{subject to an upper bound constraint on the expected code length $H(T) \leq R$.}

It is common to measure the sample-wise distortion via direct comparison of $(x, y)$ pairs through a distortion function $d(\cdot, \cdot)$, and consider the expectation $\ex \left[ d(X,Y)  \right]$ as a measure of average distortion. Instead, we propose using the \edit{logarithmic loss (log-loss)} $H(X|Y)$, or equivalently $I(X;Y)$, as an alternative metric to enforce the distortion constraint.
\edit{The log-loss distortion measure, commonly employed in learning theory, was first explored within rate-distortion theory by \citet{courtade2011multiterminal} and \citet{courtade2013multiterminal}. This measure is particularly suitable in scenarios where reconstructions can be soft, meaning that the decoder produces a distribution rather than a distorted sample point \cite{shkel2017single}.} Consequently, the optimization problem is formulated as follows:
\begin{equation}
\begin{aligned} \label{eq:no_const}
    &\min_{\enc_{T|X},\; \dec_{Y|T}} \hspace{-1em} &&H(X|Y) \\
    &\hspace{2em} \textrm{s.t.} &&X \leftrightarrow T \leftrightarrow Y, \\
    & &&H(T) \leq R, \\
    & &&P(X) = p_X .
\end{aligned}
\end{equation}

It is straightforward to check that the optimal solution of (\ref{eq:no_const}) is achieved when $T=Y$, with the identity decoder. To address this issue of decoder collapse, we introduce a constraint on the output marginal distribution, $P(Y)$:

\eqtcb{Minimum Entropy Coupling with Bottleneck (MEC-B)}{
\begin{equation}
\begin{aligned}\label{eq:mecb}
    \MECB(p_X, p_Y, R) = &\max_{\enc_{T|X},\; \dec_{Y|T}} \hspace{-1em} &&I(X;Y) \\
    &\hspace{2em} \textrm{s.t.} &&X \leftrightarrow T \leftrightarrow Y, \\
    & &&H(T) \leq R, \\
    & &&P(Y) = p_Y, \\
    & &&P(X) = p_X
\end{aligned}
\end{equation}
}

\edit{The addition of an output distribution constraint is a practical necessity, as in a lossy compression setup the decoder needs to generate outputs following a desired distribution. For example, in image restoration, the output consists of reconstructed images from the code adhering to a certain distribution, possibly the same as the input distribution.}

We explore two special cases of \eqref{eq:mecb}, where either the encoder or decoder is bypassed. This allows us to optimize the encoder and decoder separately using these cases.
First, consider the case where the bottleneck is removed, meaning the constraint $H(T) \leq R$ is relaxed, or $R \geq H(X)$. In this scenario, $X=T$, and the optimization simplifies to:

\eqtcb{Minimum Entropy Coupling (MEC)}{
\begin{equation} 
\begin{aligned} \label{eq:mec}
    \MEC(p_X, p_Y) = \max_{\enc_{Y|X}} \ &I(X;Y) \\
    \textrm{s.t.} \ \ &P(Y) = p_Y, \\
    &P(X) = p_X
\end{aligned}
\end{equation}
}

This involves identifying the probabilistic mapping $p_{Y|X}$ between the marginals $p_X$ and $p_Y$ that maximizes the obtained mutual information.
This problem, as described in \eqref{eq:mec}, has been extensively studied in the literature as minimum entropy coupling (MEC), with early research conducted by \citet{vidyasagar2012metric, painsky2013memoryless, kovavcevic2015entropy, cicalese2017find}, among others.
Thus, we define the original problem presented in \eqref{eq:mecb} as minimum entropy coupling with bottleneck (MEC-B).
Next, consider the case where the decoder is removed, resulting from the relaxation of the output distribution constraint in \eqref{eq:mecb}:

\eqtcb{Entropy-Bounded Information Maximization (EBIM)}{
\begin{equation}
\begin{aligned}\label{eq:comp}
    \COMP(p_X, R) = &\max_{\enc_{T|X}} &&I(X;T) \\
    & \ \textrm{s.t.} &&H(T) \leq R, \\
    & &&P(X) = p_X
\end{aligned}
\end{equation}
}

Similar to minimum entropy coupling, this problem identifies the joint distribution between two random variables that maximizes their mutual information. However, rather than imposing a marginal distribution constraint, it enforces a more flexible entropy constraint on one of the variables.
Lemma~\ref{lemma:decomp} provides a decomposition for the mutual information between input and output $I(X;Y)$, given the Markov chain $X \leftrightarrow T \leftrightarrow Y$.

\begin{restatable}{lemma}{firstlemma} \label{lemma:decomp}
    Given a Markov chain $X \leftrightarrow T \leftrightarrow Y$:
    \begin{equation}\label{eq:Idecomp}
        I(X;Y) = I(X;T) + I(Y;T) - I(T; X,Y).
    \end{equation}
\end{restatable}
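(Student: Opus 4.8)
The plan is to derive the identity purely from the chain rule for mutual information together with the defining property of the Markov chain, namely that $X$ and $Y$ are conditionally independent given $T$, i.e.\ $I(X;Y\mid T)=0$. No distributional structure beyond this is needed, so the argument is algebraic bookkeeping.

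First I would expand $I(X;Y,T)$ in the two standard ways afforded by the chain rule:
\begin{equation*}
    I(X;Y,T) = I(X;T) + I(X;Y\mid T) = I(X;Y) + I(X;T\mid Y).
\end{equation*}
The Markov condition kills the term $I(X;Y\mid T)=0$, leaving the relation $I(X;Y) = I(X;T) - I(X;T\mid Y)$. This already isolates $I(X;Y)$ in terms of $I(X;T)$ and a single conditional term, so it remains only to rewrite $I(X;T\mid Y)$ in the desired form.

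Next I would handle that conditional term by applying the chain rule to $I(T;X,Y)$, grouping $Y$ first:
\begin{equation*}
    I(T;X,Y) = I(T;Y) + I(T;X\mid Y).
\end{equation*}
Using the symmetry $I(T;X\mid Y)=I(X;T\mid Y)$, this gives $I(X;T\mid Y) = I(T;X,Y) - I(T;Y)$. Substituting into the previous relation yields
\begin{equation*}
    I(X;Y) = I(X;T) - \big(I(T;X,Y)-I(T;Y)\big) = I(X;T) + I(Y;T) - I(T;X,Y),
\end{equation*}
which is exactly \eqref{eq:Idecomp}.

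The whole argument is routine; the only point requiring care is selecting the groupings in the two applications of the chain rule so that the conditional-independence term appears and vanishes in the first expansion, while the remaining conditional mutual information cleanly matches a term in the second expansion. There is no substantive obstacle beyond tracking which variable is conditioned upon and invoking the symmetry of mutual information at the right moment.
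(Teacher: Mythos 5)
Your proof is correct and follows essentially the same route as the paper's: both expand $I(X;Y,T)$ two ways by the chain rule, kill $I(X;Y\mid T)$ via the Markov property, and rewrite $I(X;T\mid Y)$ as $I(T;X,Y)-I(T;Y)$ using the chain rule on $I(T;X,Y)$. No gaps; your write-up just makes explicit the bookkeeping that the paper compresses into one displayed computation.
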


The proof follows multiple applications of the chain rule for mutual information.
\edit{
The following lower bound on the MEC-B objective is attainable based on Lemma~\ref{lemma:decomp}:
\begin{equation}\label{eq:mecblb}
    I(X;Y) \geq I(X;T) + I(Y;T) - R \, .
\end{equation}
In this work, we consider maximizing the lower bound \eqref{eq:mecblb} as a proxy to the main objective.
This allows a decomposition of the encoder and decoder for the MEC-B formulation in \eqref{eq:mecb}:
}

\begin{enumerate}
    \item \edit{ \textbf{Encoder Optimization:}
    The encoder is first optimized separately, according to Entropy-Bounded Information Maximization in \eqref{eq:comp}, $\COMP(p_X, R)$, resulting in the marginal distribution $\hat{p}_T$ on the code $T$.}
    \item \edit{ \textbf{Decoder Optimization:}
    The decoder is then optimized by solving a minimum entropy coupling in \eqref{eq:mec} between the code and output marginals, $\MEC(\hat{p}_T, p_Y)$.}
\end{enumerate}
\edit{
Therefore, in terms of problems \eqref{eq:mecb}, \eqref{eq:mec}, and \eqref{eq:comp}:
\begin{align}
    \MECB(p_X, p_Y, R) &= \max_{\enc_{T|X},\ \dec_{Y|T}} \ I(X;Y) \\
    &\geq \max_{\enc_{T|X},\ \dec_{Y|T}} \ \big( I(X;T) + I(Y;T) - R \big) \\
    &\geq \COMP(p_X, R) + \MEC(\hat{p}_T, p_Y) - R \, .
\end{align}
}


In this paper, we address the Entropy-Bounded Information Maximization problem in Section~\ref{sec:COMP}, providing theoretical insights into the solution structure across the entire spectrum of rate limits. We establish an upper bound on the objective and demonstrate that only deterministic mappings can achieve this bound. Then, in Section~\ref{sec:prposedsearch}, we introduce a greedy algorithm designed to identify deterministic mappings with a guaranteed input-dependent gap from the optimal solution. Subsequently, in Section~\ref{sec:neighborhood}, we describe a method to identify optimal mappings near any deterministic mapping, effectively bridging the gap between discrete deterministic mappings and providing deeper theoretical insights into the problem structure.

Following this theoretical groundwork, Section~\ref{sec:mcg} applies the MEC-B framework to extend Markov Coding Games (MCG) with communication bottlenecks between the source and the agent. Experimental results for MCGs with rate limits are detailed in Section~\ref{sec:exp}, showcasing the practical implications of our theoretical developments. The appendix sections complement these discussions by including formal proofs for all theorems and lemmas, a concise overview of the original minimum entropy coupling problem, and additional experimental results.

\section{Related Work} \label{sec:relatedworks}
\paragraph{Couplings and Minimum Entropy Coupling}
A fundamental problem in probability theory, known as coupling, concerns determining the \textit{optimal} joint distribution of random variables given their marginal distributions. This problem has a long history, with early examples by \citet{frechet1951tableaux} seeking the joint distribution that maximizes correlation subject to marginal constraints. References \cite{den2012probability, lin2014recent, benes2012distributions, yu2018asymptotic} provide a broader treatment of these problem classes and their applications. Notably, optimal transport (OT) emerges as a significant class within this framework, where optimality is defined as minimizing the expected value of a loss function over the joint distribution. See \cite{villani2009optimal} for an in-depth treatment of the optimal transport problem.

The minimum entropy coupling (MEC) focuses on finding the joint distribution with the smallest entropy given the marginal distribution of some random variables. This problem has been first studied in \cite{vidyasagar2012metric, painsky2013memoryless, kovavcevic2015entropy, cicalese2017find}, among others. 
While it is shown by \citet{vidyasagar2012metric, kovavcevic2015entropy} that MEC is NP-Hard, the literature contains many approximation algorithms for this problem. One of the earliest greedy algorithms for MEC was introduced by \citet{kocaoglu2017entropic} in the context of causal inference, achieving a local minimum with a gap of \(1 + \log n\) bits from the optimum, where \(n\) represents the size of the alphabet. \edit{This bound was further improved in subsequent works \cite{compton2022tighter, compton2023minimum}.}


Based on tools from the theory of majorization \cite{marshall1979inequalities},  \citet{cicalese2019minimum} developed a new greedy algorithm producing solutions 1 bit away from the optimal. Subsequent improvements by \citet{li2021efficient} enabled the construction of a coupling whose entropy is within 2 bits of the optimal value, regardless of the number of random variables involved. Despite these advances, \citet{compton2022tighter} identified a \textit{majorization barrier} that limits further improvements, while \citet{compton2023minimum} introduced the profile method offering stronger lower bounds for the coupling entropy.

Minimum entropy coupling finds innovative applications beyond causal inference \cite{kocaoglu2017entropic, compton2020entropic, javidian2021quantum}. For instance, \citet{sokota2022communicating} utilized it in Markov coding games to enable reinforcement learning agents to communicate via Markov decision process trajectories. This application showcased MEC's utility in enabling efficient information transmission through constrained environments like video game interactions. Similarly, \citet{de2022perfectly} applied MEC to securely encode secret information in regular text, showing MEC corresponds to the maximally efficient secure procedure.

\paragraph{Lossy Source Coding}

While log-loss is widely used in prediction and learning, its application as a distortion measure in the context of source coding has been less explored, with the earliest examples appearing in \cite{courtade2011multiterminal} and \cite{courtade2013multiterminal}. Log-loss is particularly suited as a distortion measure in soft reconstructions, meaning the decoder outputs a distribution. \citet{shkel2017single} explored a single-shot lossy source coding setting under logarithmic-loss, using a straightforward encoding scheme. Unlike the EBIM formulation in \eqref{eq:comp} which imposes a direct entropy constraint on the code, this approach constrained the code by the cardinality of its support. 


\edit{
Finally, \citet{pmlr-v97-blau19a} introduced the Rate-Distortion-Perception (RDP) tradeoff in lossy compression. The RDP framework does not fix the output distribution; instead, it imposes a softer perceptual constraint on the generated outputs. Additionally, our work incorporates an entropy constraint on $T$ as a rate bottleneck, while in the RDP formulation, $I(X; Y)$ can be interpreted as the rate bottleneck.
}
\edit{
In this line of research, the work of \citet{liu2021lossy} is closest in spirit to our approach, as the authors studied a lossy compression setting with different source and reconstruction distributions. They demonstrated that their setting could be formulated as a generalization of optimal transport with an entropy bottleneck. However, they used mean squared error (MSE) as the distortion metric, while we consider log-loss. Therefore, the mathematical machinery required for our analysis differs significantly from prior work.
}

\section{Entropy-Bounded Information Maximization}\label{sec:COMP}

Consider a discrete random variable $X$ defined over the alphabet $\mathcal{X} = \{1, \ldots, n\}$ with a given marginal probability distribution $p_X$. The following problem aims to establish a maximal information coupling between $X$ and another random variable $T$, defined over the alphabet $\mathcal{T} = \{1, \ldots, m\}$, where the entropy of $T$ is constrained to be no more than $R$ bits. Unlike minimum entropy coupling, the marginal distribution of the second random variable $T$ is not predetermined; the only constraint on $T$ is its entropy.
\begin{align} \label{eq:obj}
    \obj(p_X, R)= \max\limits_{\Pxt \in \mathcal{M}} I(X;T),
\end{align}

where set $\mathcal{M}$ consists of all joint distributions $\Pxt$ that satisfy the following conditions: 
\begin{enumerate}[itemsep=4pt, parsep=0pt, topsep=0pt]
    \item $\sum_{t} \Pxt(x, t) = p_X(x)$, ensuring that the marginal distribution of $X$ is preserved.
    \item $H(T) \leq R \leq H(X)$, ensuring the entropy of $T$ is constrained to be no more than $R$.
\end{enumerate}

We call this problem Entropy-Bounded Information Maximization (EBIM). Note that the objective in (\ref{eq:obj}) is upper-bounded by $R$, since:
\begin{align*}
    \obj(p_X, R)
    &= \max\limits_{\Pxt \in \mathcal{M}} I(X;T) \\
    &\leq  \max\limits_{\Pxt \in \mathcal{M}} H(T) \leq R. \numberthis\label{eq:objUpper}
\end{align*}
The following theorem establishes that only deterministic couplings can achieve this upper-bound.

\begin{restatable}{theorem}{firsttheorem}\label{thm:funcMapping}
    $\obj(p_X, R) = R$ if and only if there exists a function $g: \mathcal{X} \to \mathcal{T}$ such that $H(g(X))=R$.
\end{restatable}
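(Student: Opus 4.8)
The plan is to pin down exactly when the upper bound $\obj(p_X,R)\le R$ established in \eqref{eq:objUpper} is met with equality, by examining the single inequality chain $I(X;T)\le H(T)\le R$ through the identity $I(X;T)=H(T)-H(T\mid X)$. The two slack terms $R-H(T)\ge 0$ and $H(T\mid X)\ge 0$ can each be saturated independently, and I expect the equality $\obj(p_X,R)=R$ to force both of them to vanish simultaneously, which is precisely the statement that the optimal coupling is a deterministic map of entropy exactly $R$.

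For the ``if'' direction, suppose a function $g:\mathcal{X}\to\mathcal{T}$ satisfies $H(g(X))=R$. I would take the deterministic coupling $T=g(X)$, i.e.\ $\Pxt(x,t)=p_X(x)\,\mathbf{1}[t=g(x)]$. This clearly preserves the $X$-marginal and yields $H(T)=H(g(X))=R\le R$, so it lies in $\mathcal{M}$. Since $T$ is a function of $X$ we have $H(T\mid X)=0$, hence $I(X;T)=H(T)=R$; combined with \eqref{eq:objUpper} this gives $\obj(p_X,R)=R$.

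For the ``only if'' direction, I would first note that $\mathcal{M}$ is nonempty (it contains the constant map $T\equiv 1$), closed, and bounded as a subset of the probability simplex, and that $I(X;T)$ is continuous, so the maximum is attained at some $\Pxt^\star\in\mathcal{M}$. Assuming $\obj(p_X,R)=R$, this optimizer satisfies $I(X;T)=R$, and substituting into $R=H(T)-H(T\mid X)$ together with $H(T)\le R$ and $H(T\mid X)\ge 0$ forces both $H(T)=R$ and $H(T\mid X)=0$. The latter means that for every $x$ with $p_X(x)>0$ the conditional law $\Pxt^\star(\cdot\mid x)$ is a point mass; defining $g(x)$ to be that atom, and $g$ arbitrarily on the null set $\{x: p_X(x)=0\}$, produces a function with $T=g(X)$ almost surely, whence $H(g(X))=H(T)=R$.

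The two entropy computations are routine; the only points that need care are the attainment of the maximum, which I handle by the compactness of $\mathcal{M}$ and continuity of mutual information, and the clean extraction of $g$ from $H(T\mid X)=0$, where one must restrict to the support of $p_X$ so that the relevant conditional distributions are well defined before reading off the deterministic values. I expect no genuine obstacle beyond this bookkeeping, since the heart of the argument is simply the equality analysis of $I(X;T)\le H(T)\le R$.
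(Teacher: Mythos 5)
Your proposal is correct and follows essentially the same route as the paper's proof: construct the deterministic coupling $T=g(X)$ for the ``if'' direction, and for the ``only if'' direction use the identity $I(X;T)=H(T)-H(T\mid X)$ together with $H(T)\le R$ and $H(T\mid X)\ge 0$ to force $H(T)=R$ and $H(T\mid X)=0$, then read off $g$. Your added care about attainment of the maximum (compactness of $\mathcal{M}$) and about defining $g$ only on the support of $p_X$ are refinements the paper leaves implicit, not a different argument.
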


The formal proof is presented in Section~\ref{sec:thm1proof}. Note that the mutual information $I(X;T)$ is invariant to permutations on $\mathcal{T}$. Specifically, for any permutation $\pi: \mathcal{T} \to \mathcal{T}$, we have $I(X;T) = I(X, \pi(T))$. Given that problem (\ref{eq:obj}) only constrains the entropy $H(T)$, the objective is indifferent to such permutations.
    Let us define the permutation group of a joint distribution $\Pxt$ as:
    \begin{align}
        \mathcal{P}(\Pxt) = \left\{P \ | \ P(x, \pi(t)) = \Pxt(x, t), \ \ \forall \pi:\mathcal{T}\to\mathcal{T} \right\}.
    \end{align}

\begin{remark}    
    Each partition of $\mathcal{X}$ is associated with a permutation group of a deterministic mapping. Consequently, the total number of potential deterministic mapping groups, independent of the entropy constraint on $T$, will be the total number of feasible partitions of $\mathcal{X}$.
    The total number of ways to partition a set of size $n$ corresponds to the $n$-th Bell number, symbolized by $B_n$. The growth rate of the Bell numbers is $\mathcal{O}(n^n)$ \cite{de1981asymptotic}, rendering brute force iteration of all deterministic mappings infeasible.  
\end{remark}
    
            
            
            
            

In Figure~\ref{fig:numericalvstheorem} (left),  a brute force method is applied to solve EBIM \eqref{eq:obj} for an input alphabet of size three. As observed, there are five potential partitions on $\mathcal{X}$, each corresponding to a point where $\obj(p_X, R) = R$. Given the impracticality of brute force search for large alphabet sizes, in Section~\ref{sec:prposedsearch}, we introduce a greedy search algorithm to identify deterministic mappings with a guaranteed performance gap from the optimal. Following this, in Section~\ref{sec:neighborhood}, we explore optimal mappings close to these deterministic mappings, providing a strategy to narrow the gap between the identified deterministic mappings.


\subsection{Proposed Search Algorithm for Deterministic Mappings}\label{sec:prposedsearch}
Since iterating over all deterministic mappings is not feasible, one should look for carefully constructed search algorithms to find such mappings with resulting $H(T)$ as close as possible to $R$. Without the loss of generality, suppose $p_X = \rvect{p_1, \ p_2, \ \cdots, \ p_n}$ is arranged in a decreasing order. Algorithm~\ref{alg:search} presents a search approach for discovering a deterministic mapping $ T=g(X) $, resulting in $I(X;T)$ that is at most $h(p_2)$ bits away from the optimal $ \obj(p_X, R)$, where $h(\cdot)$ is the binary entropy function: $h(p)=-p\log(p)-(1-p)\log(1-p)$.


\begin{algorithm}[H]
\caption{Deterministic EBIM Solver}\label{alg:search}
\begin{algorithmic}[1]
    \Require $p_X, R$
    \Ensure $\Pxt$
    \State $\Pxt \gets \text{diag}(p_X)$ 
    \If{$R \geq H(X)$}
    \State \Return $\Pxt$
    \EndIf
    \For{$i \gets 1$ to $\vert p_X \vert - 1$}
    \State $\Ps{i} \gets \text{Merge the two columns with the smallest sum in }  \Pxt$.
    \State $\Is{i} \gets \text{Mutual Information imposed by } \Ps{i}$.
    \State $\Pl{i} \gets \text{Merge the two columns with the largest sum in } \Pxt$.
    \State $\Il{i} \gets  \text{Mutual Information imposed by } \Pl{i}$. \label{alg:Il}
    \If{$\Is{i} \leq R$} 
        \State \Return $\Ps{i}$
    \ElsIf{$\Il{i} \leq R < \Is{i}$}
        \State \Return $\Pl{i}$
    \Else
        \State $\Pxt \gets \Pl{i}$
    \EndIf
    \EndFor
    \end{algorithmic}
\end{algorithm}

\edit{The deterministic EBIM solver in Algorithm~\ref{alg:search} has $\mathcal{O}(n\log n)$ time complexity, where $n$ is the cardinality of the input alphabet. 
This is because the main loop of the algorithm runs for at most $n$ steps (as at each step we combine two elements of the input distribution) and finding min/max elements can be done in $\mathcal{O}(\log n)$ using a heap data structure. Also, the mutual information calculation at each step can be done in constant time by only calculating the decrease in entropy after combining two elements of the distribution.
}
\begin{restatable}{theorem}{secondtheorem}\label{thm:guarantee}
    If the output of Algorithm~\ref{alg:search} yields mutual information $\widehat{I}$, then 
    \begin{equation}
    \obj(p_X, R) - \widehat{I} \leq h(p_2),
    \end{equation}
    where $h(\cdot)$ is the binary entropy function, and $p_2$ denotes the second largest element of $p_X$.
\end{restatable}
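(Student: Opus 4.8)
The plan is to bound the additive gap $\obj(p_X,R)-\widehat I$ by first replacing the optimum with its trivial upper bound $\obj(p_X,R)\le R$ from \eqref{eq:objUpper}, reducing the claim to showing $R-\widehat I\le h(p_2)$. Since every candidate produced by Algorithm~\ref{alg:search} is a deterministic mapping obtained by merging columns of $\mathrm{diag}(p_X)$, each such mapping $T=g(X)$ satisfies $I(X;T)=H(T)$, so I can track the evolution of the mutual information purely as the entropy of the current partition of $\mathcal X$. The elementary object is the entropy drop caused by merging two columns of masses $a$ and $b$, which equals
\begin{equation*}
 f(a,b) \;=\; (a+b)\log(a+b)-a\log a-b\log b \;=\;(a+b)\,h\!\left(\tfrac{a}{a+b}\right).
\end{equation*}
Thus $\Is{i}$ and $\Il{i}$ differ from the current value by $f$ evaluated on the two smallest and the two largest columns, respectively.

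The technical heart of the argument is the inequality
\begin{equation*}
 f(a,b)\;\le\; h\big(\min(a,b)\big)\qquad\text{whenever } a+b\le 1 .
\end{equation*}
Writing $s=a+b\le1$ and $r=\min(a,b)/s\in[0,\tfrac12]$, this is exactly $s\,h(r)\le h(sr)$, which I would establish by fixing $r$ and studying $D(s)=h(sr)-s\,h(r)$: one checks $D(0)=D(1)=0$, while $D'(0^+)>0$ and $D'(1)=\log(1-r)<0$, so $D$ is first increasing then decreasing and hence nonnegative on $[0,1]$. I expect this monotonicity computation to be the main obstacle, as the remainder is structural bookkeeping.

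Finally I would exploit the shape of the committed iterates. Because the algorithm only ever commits to merging the two \emph{largest} columns, at the start of iteration $i$ the current partition has the prefix form $\{p_1+\dots+p_i,\ p_{i+1},\dots,p_n\}$; consequently the two columns merged by $\Pl{i}$ have minimum mass $p_{i+1}\le p_2$, and the two columns merged by $\Ps{i}$ have minimum mass $p_n\le p_2$. Together with $p_2\le\tfrac12$ and monotonicity of $h$ on $[0,\tfrac12]$, the key inequality yields that both $f$-values are at most $h(p_2)$. It then remains to split on the return branch, using the invariant that the current entropy exceeds $R$ at the start of every iteration (true at $i=1$ since $R<H(X)$, and preserved by the \texttt{else} branch, where $\Il{\cdot}>R$). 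If the algorithm returns $\Ps{i}$ (so $\Is{i}\le R$), then $R-\widehat I\le (\text{current entropy})-\Is{i}=f(\text{two smallest})\le h(p_2)$; if it returns $\Pl{i}$ (so $\Il{i}\le R<\Is{i}$), then $R-\widehat I<\Is{i}-\Il{i}=f(\text{two largest})-f(\text{two smallest})\le f(\text{two largest})\le h(p_2)$. In both cases the gap is at most $h(p_2)$, which completes the proof.
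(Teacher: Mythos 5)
Your proposal is correct and follows essentially the same route as the paper's proof: both reduce the gap to $R - \widehat{I}$ via $\obj(p_X,R)\le R$, exploit the prefix structure $\{p_1+\cdots+p_i,\,p_{i+1},\dots,p_n\}$ of the committed iterates, and bound the entropy drop of a single merge by the binary entropy of the smaller merged mass. The only differences are presentational: the paper packages your branch-by-branch analysis as a remark on maximal consecutive gaps of the decreasing sequence $\Il{0},\,\Is{1},\,\Il{1},\dots$, and it obtains your key inequality $f(a,b)\le h\big(\min(a,b)\big)$ for $a+b\le 1$ from monotonicity of the merge function $\DeltaHsymbol$ together with the identity $\DeltaH{p}{1-p}=h(p)$, rather than from your calculus argument for $s\,h(r)\le h(sr)$ (which, incidentally, follows at once from concavity of $h$ and $h(0)=0$).
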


Let $n=\vert p_X\vert$. The procedure outlined in Algorithm~\ref{alg:search} establishes a series of deterministic mappings 
$ \Ps{1},\, \Pl{1},\, \cdots,\, \Ps{n-1},\, \Pl{n-1}$, 
corresponding to a decreasing sequence of mutual information values 
$ \Is{1},\, \Il{1},\, \cdots,\, \Is{n-1},\, \Il{n-1} $. 
The algorithm then picks the mapping with the highest mutual information that does not exceed $R$. The proof involves establishing an upper bound on the gap between these successive mutual information values. A formal proof is presented in Section~\ref{sec:thm2proof}.

\edit{
It is important to highlight that the gap described in Theorem~\ref{thm:guarantee} is bounded by one bit, i.e., $h(p_2) \leq 1$, with equality achieved when $p_X = [0.5, 0.5]$.
Although the gap is capped at one bit, the maximal mutual information in the EBIM formulation scales with $R$. Thus, the most natural interpretation of this gap emerges in higher rate regimes. Furthermore, the gap described in Theorem~\ref{thm:guarantee} remains small when $p_2$ is small, specifically in cases where the input alphabet size is large and the distribution is not heavily skewed toward a few elements.
}


\subsection{Optimal Coupling Around Deterministic Mappings}\label{sec:neighborhood}

Section~\ref{sec:prposedsearch} introduced a greedy search algorithm designed to identify deterministic mappings with a guaranteed and input-dependent gap from the optimal. In this section, we find the optimal couplings close to any deterministic mapping. This method allows us to close the gap between the mappings identified by Algorithm~\ref{alg:search}, as will be demonstrated later.

\begin{restatable}{theorem}{thirdtheorem} \label{thm:neighbor}
Let $\Pxt$ denoted by a $|\mathcal{X}| \times |\mathcal{T}|$ matrix, defines a deterministic mapping $T = g(X)$, with $ I(X;T)=H(T)=R_g $. We have $ \obj(p_X, R_g) = R_g $, and for small enough $ \epsilon > 0 $:

\begin{enumerate}
    \item $\obj(p_X, R_g + \epsilon)$ is attained as follows:\\
    Normalize the columns by dividing each column by its sum. Then, select the cell with the smallest normalized value and move an infinitesimal probability mass from this cell to a new column of $\Pxt$ in the same row.

    \item $\obj(p_X, R_g - \epsilon)$ is achieved as follows:\\
    Identify the columns with the smallest and largest sums in $\Pxt$. Select the cell with the smallest value in the column with the lowest sum. Transfer an infinitesimal probability mass from this cell to the column with the highest sum in the same row.

\end{enumerate}

\end{restatable}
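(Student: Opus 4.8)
The plan is to reduce the statement to a local perturbation analysis around the given deterministic mapping $g$. Since $g$ is a function we have $H(T\mid X)=0$ and $I(X;T)=H(T)=R_g$, so $\obj(p_X,R_g)=R_g$ is immediate from Theorem~\ref{thm:funcMapping}. Writing $I(X;T)=H(T)-H(T\mid X)$ throughout, the problem becomes the following: among all perturbations of $\Pxt$ away from $g$ that saturate the entropy constraint at $H(T)=R_g\pm\epsilon$, find the one minimizing the induced conditional entropy $H(T\mid X)$. I would first restrict attention to \emph{elementary transfers} — moving an infinitesimal mass $\delta$ within a single row $x$ from its occupied cell to another cell, either an existing column or a brand-new one — since these are precisely the moves that preserve the row marginals $p_X$ while altering a single conditional distribution.

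The core of the argument is a second-order expansion of the two entropy terms. For a transfer of mass $\delta$ out of the occupied cell $(x,g(x))$, writing $a:=p_X(x)$ and $q:=P(T=g(x))$, I would establish
\begin{align*}
    \Delta H(T\mid X) &= -\delta\log\delta + \delta(\log a + 1) + O(\delta^2), \\
    \Delta H(T) &=
    \begin{cases}
        -\delta\log\delta + \delta(\log q + 1) + O(\delta^2), & \text{split into a new column},\\
        \delta\log(q/q') + O(\delta^2), & \text{transfer into an existing column of sum } q'.
    \end{cases}
\end{align*}
The decisive observation is that only a split into a \emph{new} column produces the singular $-\delta\log\delta$ growth in $H(T)$; a transfer between two existing columns changes $H(T)$ only at order $\delta$.

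For Part 1 this explains the construction. To \emph{raise} $H(T)$ to $R_g+\epsilon$ one must open a new column, since a transfer into an existing column leaves an uncancelled $-\delta\log\delta$ term in $H(T\mid X)$ and thus strictly decreases $I$. With a new column the singular terms cancel, giving $\Delta I=\Delta H(T)-\Delta H(T\mid X)=\delta\log(q/a)+O(\delta^2)$. Fixing the budget $\Delta H(T)=\epsilon$ pins $\delta$ to leading order independently of the chosen cell, so $\Delta I$ is maximized by maximizing $\log(q/a)=-\log P(X=x\mid T=g(x))$, that is, by splitting the cell of smallest normalized value, exactly as stated. For Part 2, \emph{lowering} $H(T)$ to $R_g-\epsilon$ cannot use a new column (which only raises entropy), so mass must be transferred between existing columns; here $\Delta H(T)=\delta\log(q_1/q_2)$ forces $\delta=\Theta(\epsilon)$, and the unavoidable loss is governed by $\Delta H(T\mid X)\approx -\delta\log\delta\approx \epsilon\log(1/\epsilon)/\log(q_2/q_1)$. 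Minimizing this loss maximizes $\log(q_2/q_1)$, selecting the columns of smallest and largest sum, while the subleading term $\delta(\log a+1)$ breaks ties in favor of the smallest cell value in the source column.

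The main obstacle will be upgrading "optimal among single transfers'' to "optimal among all perturbations of $\Pxt$.'' Two difficulties arise: the entropy functionals are non-smooth at the deterministic vertex (the derivative of $-u\log u$ diverges as $u\to0^+$), so a naive Lagrangian/KKT argument with finite multipliers is unavailable; and Part 1 perturbs along the boundary of the simplex by enlarging the support of $T$. I would address this by decomposing an arbitrary small perturbation into a superposition of elementary transfers and invoking the concavity of $u\mapsto-u\log u$ to show that spreading the moved mass across several cells or columns can only worsen the ratio of $I$-gain to entropy budget, so that concentrating all the transferred mass in one optimally-chosen move is best to leading order. The delicate step is verifying that the neglected $O(\delta^2)$ and cross terms cannot overturn this leading-order ranking uniformly as $\epsilon\to0$.
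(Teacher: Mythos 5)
Your proposal follows essentially the same route as the paper's proof: both expand $H(T)$ and $H(T\mid X)$ (equivalently $H(X,T)$) under infinitesimal single-row mass transfers, observe that only a transfer into a fresh all-zero column produces the singular $-\delta\log\delta$ term in $H(T)$ that can cancel the matching singularity in $H(T\mid X)$, and then optimize the leading-order coefficient to select the smallest normalized cell (Part 1) or the smallest-to-largest existing-column transfer (Part 2). The step you flag as delicate --- passing from single transfers to arbitrary admissible perturbations --- is handled in the paper exactly as you propose: every perturbation preserving $p_X$ and the sign constraints is written as a nonnegative combination of the basis transfers, the ratio $\mathrm{d}I_{XT}/\mathrm{d}H_T$ becomes a weighted combination of the per-transfer contributions to leading order, and optimizing over the normalized weights concentrates all mass on a single optimally chosen transfer.
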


\begin{wrapfigure}{r}{0.4\textwidth}
\vspace{-20pt} 
\begin{minipage}{\linewidth}

\begin{figure}[H] 
\centering
\includegraphics[width=0.85\linewidth]{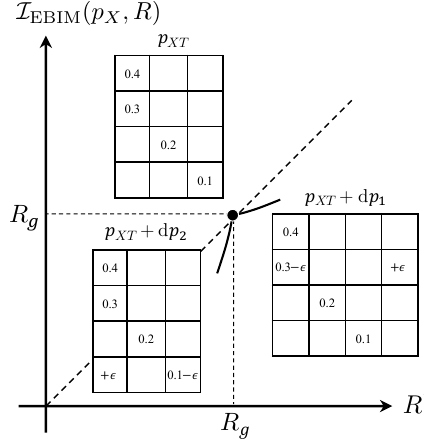}
\vspace{-4pt}
\caption{\small{An example for Theorem~\ref{thm:neighbor}}.}
\label{fig:didh} 
\end{figure}

\end{minipage}
\vspace{-20pt} 
\end{wrapfigure}

Figure~\ref{fig:didh} on the right depicts an example of optimal solutions in the neighborhood of a deterministic mapping. While Algorithm~\ref{alg:search} effectively identifies deterministic mappings that produce mutual information close to the budget $R$, Theorem~\ref{thm:neighbor} can help bridge the remaining gap. More specifically, one can begin with a deterministic mapping and use two probability mass transformations outlined in Theorem~\ref{thm:neighbor} to navigate across the $I$-$R$ plane. 

\begin{figure}[t!]
    \centering
    \includegraphics[width=.8\linewidth]{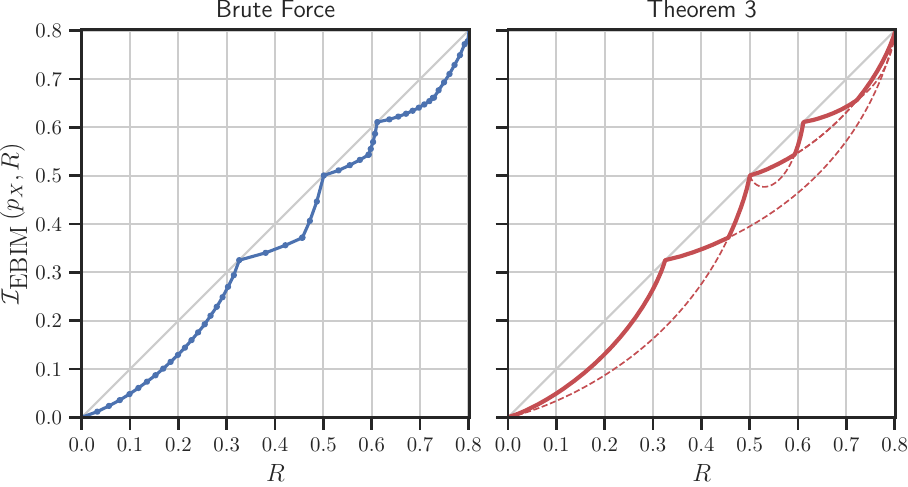}
    \caption{
    Solutions to the EBIM problem for $p_X=[0.7, 0.2, 0.1]$. \textbf{Left:} brute force solution. \textbf{Right:} application of the transformations from Theorem~\ref{thm:neighbor} to each deterministic mapping (dashed lines) and selection of solutions with maximal mutual information for each $R$ value (thick solid line). This strategy effectively recovers optimal solutions, aligning with those found by brute force in this case.
    }
    \label{fig:numericalvstheorem}
\end{figure}

Figure~\ref{fig:numericalvstheorem} illustrates this strategy; for $p_X=[0.7,\ 0.2,\ 0.1]$, identifying all 5 possible deterministic mappings is straightforward. Applying the transformations from Theorem~\ref{thm:neighbor} then yields various solutions across the $I$-$R$ plane (represented by dashed lines). Subsequently, one can select the solution that maximizes mutual information for any given value of $R$ (highlighted with a thick solid line), thus producing a comprehensive solution for every value of $R$. As demonstrated in Figure~\ref{fig:numericalvstheorem}, this strategy recovers the optimal solutions, as determined by brute force, for the simple case of an input alphabet of three. However, while effective, the optimality of this approach remains a conjecture.

\section{Application: Markov Coding Game with Rate Limit} \label{sec:mcg}

Markov Coding Games (MCGs), as introduced by \citet{sokota2022communicating}, represent a specialized type of multi-player decentralized Markov Decision Processes (MDPs) involving several key components: a source, an agent (sender), a Markov decision process, and a receiver. An MCG episode unfolds in three stages: initially, the agent receives a private message from the source, which it must then indirectly convey to the receiver. Next, the agent participates in an episode of the Markov decision process. Finally, the receiver attempts to decode the original message based on the observed MDP trajectory. The overall reward is a combination of the MDP payoff and the accuracy with which the receiver decodes the message. MCGs are particularly interesting due to their ability to generalize frameworks like referential games \cite{skyrms2010signals} and source coding \cite{cover1999elements}.

We will consider a natural extension to Markov Coding Games, where the link from the source to the agent is rate-limited. This means, contrary to the original setting of \citet{sokota2022communicating}, the agent does not fully observe the message at each MDP round, but will receive a compressed version of the message iteratively, and in turn, encodes information about the message in the MDP trajectory for the receiver.

Following \citet{sokota2022communicating}, we define a rate-limited MCG as a tuple $\langle (\mathcal{S},  \mathcal{A}, \mathcal{T}, \mathcal{R}), \mathcal{M}, \mu, \zeta, R \rangle$, where \((\mathcal{S}, \mathcal{A}, \mathcal{T}, \mathcal{R})\) is an MDP denoted by state and action spaces, and reward and transition functions, respectively. \(\mathcal{M}\) is a set of messages, \(\mu\) is the prior distribution over messages \(\mathcal{M}\), \(\zeta\) is a non-negative real number we call the message priority, and finally, $R$ is the communication rate limit between the source and the agent. An MCG episode proceeds in the following steps:

\begin{enumerate}[itemsep=3pt, parsep=0pt, topsep=0pt]
    \item  Message \(M \sim \mu\) is sampled from the prior over messages at the source.
    \item Based on the selected message $M$ and the history of the MDP episode, the source generates and transmits signal $T$ to the agent, adhering to the rate limit $R$.
    \item The Agent uses a conditional policy $\pi_{|T}$, which takes current state \(s \in \mathcal{S}\) and received signal \(T \) as input and outputs distributions over MDP actions \(\mathcal{A}\), to generate the next action $a$. 
    \item After repeating steps 2 and 3, the agent's terminal MDP trajectory \(Z\) is given to the receiver as an observation.
    \item The receiver uses the terminal MDP trajectory $Z$ to output a distribution over messages \(\mathcal{M}\), estimating the decoded message \(\hat{M}\).
\end{enumerate}

The objective of the agents is to maximize the expected weighted sum of the MDP reward and the accuracy of the receiver’s estimate \(\mathbb{E} [\mathcal{R}(Z) + \zeta\mathbb{I}[M = \hat{M}] ]\) \cite{sokota2022communicating}. Optionally,
If a suitable distance function exists, instead, the objective can also be adjusted to minimize the difference between the actual message and the guess. A diagram of the structure MCG with rate limit is shown in Figure \ref{fig:mcg}.

\begin{figure}[!h] 
    \vspace{-.8em}
    \centering \includegraphics[width=0.75\linewidth]{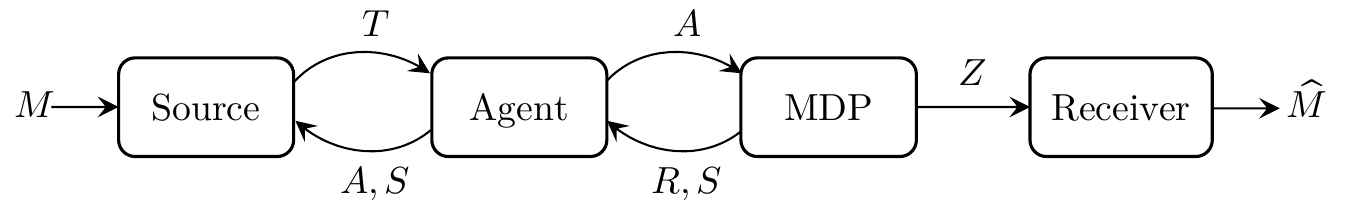}
    \caption{The structure of a Markov Coding Game with Rate Limit.}\label{fig:mcg} 
\end{figure}

\subsection{Method Description}\label{sec:mgcmethod}

\begin{wrapfigure}{r}{0.47\textwidth}
\vspace{-22pt} 

\begin{minipage}{0.02\linewidth}
\end{minipage}
\hfill
\begin{minipage}{0.97\linewidth}

\begin{algorithm}[H]
\caption{Source}\label{alg:source}
\begin{algorithmic}[1]
    \State \textbf{Input}: $\pi$, $\mu$, $R$, $s^0$
    \State Observe message $m \sim \mu$
    \State Initialize message belief $p_M \gets \mu$
    \State Initialize state $s \gets s^0$
    \While{Source's turn}
        \State $p_{MT}$ $\gets$ \texttt{compress}($p_M$, $R$)
        \State $t \sim p_{T|M}(m)$ 
        \State \textbf{Send} $t$ to the agent.
        \State $p_{T} \gets \sum_{m'} p_{MT}(m', \cdot)$ 
        \State $p_{TA}$ $\gets$ \texttt{min\_ent\_coupling}($p_T$, $\pi(s)$)
        \State $p_{MA} \gets \sum_{t'} p_{MT}(\cdot, t') \ p_{A|T}(t') $
        \State $a, s \gets$ \textbf{Observe} action and next state 
        \State $p_{M} \gets p_{M|A}(a)$
    \EndWhile
\end{algorithmic}
\end{algorithm}

\vspace{-12pt}

\begin{algorithm}[H]
\caption{Agent}\label{alg:agent}
\begin{algorithmic}[1]
    \State \textbf{Input}: $\pi$, $\mu$, $R$, $s^0$
    \State Initialize message belief $p_M \gets \mu$
    \State Initialize state $s \gets s^0$
    \While{Agent's turn}
        \State $p_{MT}$ $\gets$ \texttt{compress}($p_M$, $R$)
        \State \textbf{Receive} $t$ from the source.
        \State $p_{T} \gets \sum_{m'} p_{MT}(m', \cdot)$
        \State $p_{TA}$ $\gets$ \texttt{min\_ent\_coupling}($p_T$, $\pi(s)$)
        \State $\pi_{|T} \gets p_{A|T}(t)$
        \State $a \sim \pi_{|T}$
        \State $s \gets $ Commit action $a$ to MDP.
        \State $p_{MA} \gets \sum_{t'} p_{MT}(\cdot, t') \ p_{A|T}(t') $
        \State $p_{M} \gets p_{M|A}(a)$
    \EndWhile
    \end{algorithmic}
\end{algorithm}

\vspace{-12pt}

\begin{algorithm}[H]
\caption{Receiver} \label{alg:receiver}
\begin{algorithmic}[1]
    \State \textbf{Input}: $z$, $\pi$, $\mu$, $R$, $s^0$
    \State Initialize message belief $p_M \gets \mu$
    \State Initialize state $s \gets s^0$
    \For{$s, a \in z$}
        \State $p_{MT}$ $\gets$ \texttt{compress}($p_M$, $R$)
        
        \State $p_{T} \gets \sum_{m'} p_{MT}(m', \cdot)$
        \State $p_{TA}$ $\gets$ \texttt{min\_ent\_coupling}($p_T$, $\pi(s)$)
    
    
        \State $p_{MA} \gets \sum_{t'} p_{MT}(\cdot, t') \ p_{A|T}(t') $
        \State $p_{M} \gets p_{M|A}(a)$
    \EndFor
    \State \Return $\arg\max_{m'} p_M(m')$
\end{algorithmic}
\end{algorithm}

\end{minipage}
\vspace{-25pt} 
\end{wrapfigure}

\paragraph{Marginal Policy} 
Following \citet{sokota2022communicating}, before execution we first derive a marginal policy $\pi$ for the MDP, based on the Maximum-Entropy reinforcement learning objective: 
\begin{align}
    \max_{\pi} \mathbb{E}_{\pi} \left[ \sum_{t} R(S_t, A_t) + \beta H(A_t | S_t) \right]. \label{eq:maxentrl}
\end{align}
The value of $\beta$ in  \eqref{eq:maxentrl} needs to be determined in accordance with the message priority $\zeta$ of the MCG. Note that this marginal policy does not depend on the choice of message. By introducing stochasticity into this policy, we can encode information about the message into the selection of actions at each step during runtime. For more details, see Section~\ref{sec:rlnotations}.

\paragraph{Step 1 - Source: Message Compression}
At the beginning of each round, given the updated message belief $p_M$, the source compresses the message to generate the signal $T$, adhering to the source-agent rate limit $R$, by solving EBIM in \eqref{eq:obj}. The source then transmits the signal $T$ to the agent. Subsequently, after observing the action taken by the agent, the source updates the message belief for the next round. Algorithm~\ref{alg:source} outlines the steps taken by the source.

\paragraph{Step 2 - Agent: Minimum Entropy Coupling}
As illustrated in Algorithm~\ref{alg:agent}, at each round, upon receiving the signal $T$, the agent constructs a conditional policy $\pi_{|T}$ by performing minimum entropy coupling between the action distribution from the marginal policy \(\pi(s)\) with the signal distribution \(p_T\). Subsequently, the next action is sampled from the conditional policy, \(a \sim \pi_{|T}\). Finally, the agent updates the message belief based on the chosen action.

\paragraph{Receiver: Decoding the Message}
Given the agent’s final MDP trajectory, the receiver mirrors the actions of the source and agent to update the message belief at each step. As outlined in Algorithm~\ref{alg:receiver}, the process begins with the receiver compressing the message based on the current message belief. This is followed by performing minimum entropy coupling between the marginal policy and the distribution of the compressed message. The final message belief is used to estimate the decoded message.

\subsection{Experimental Results}\label{sec:exp}

This section presents the experimental results of the method described in Section~\ref{sec:mgcmethod}, applied to Markov Coding Games. For our experiments, we utilize a noisy \textit{Grid World} environment for the Markov Decision Process. Section~\ref{sec:gridworld} provides more detail on the environment setup used in this experiment.

The marginal policy is learned through Soft Q-Value iteration, as described in Algorithm~\ref{alg:softq}. By increasing the value of \(\beta\) in Equation~\eqref{eq:maxentrl}, we induce more randomness into the marginal policy. Consequently, higher values of \(\beta\) lead to an increase in the total number of steps taken by the agent to reach the goal, resulting in a more heavily discounted reward. Conversely, as the entropy of actions at each state is increased, there is an increase in the mutual information between the actions and the compressed message during the minimum entropy coupling at each step. This dynamic establishes a fundamental trade-off between the MDP reward and the receiver's decoding accuracy, through the adjustment of \(\beta\). Figure~\ref{fig:twobetas} shows policies learned by high and low values of $\beta$.

\begin{wrapfigure}{r}{0.5\textwidth}
    \vspace{-18pt} 
    \begin{minipage}{0.5\textwidth}
    \begin{algorithm}[H]
    \caption{Uniform Quantizer Encoder}\label{alg:uniformquant}
    \begin{algorithmic}[1] 
        \Require $p_X, R$
        \Ensure $p_{XT}$
        \State $n \gets \text{length of } p_X$
        \State $m \gets \lfloor 2^R \rfloor$ 
        \State $\text{partition\_size} \gets \lceil n / m \rceil$
        \State Initialize $p_{XT}$ as an $n \times m$ zero matrix
        \For{$i \gets 0$ \textbf{to} $m-1$}           
            \State $start \gets i \times \text{partition\_size}$
            \State $end \gets \min(start + \text{partition\_size}, n)$
            \State $p_{XT}[start:end, i] \gets p_X[start:end]$
        \EndFor
        \State\Return $p_{XT}$      
    \end{algorithmic}
    \end{algorithm}
    \end{minipage}
    \vspace{-15pt} 
\end{wrapfigure}

We compare our proposed compression method in Algorithm~\ref{alg:search} with a baseline of uniform quantization. As detailed in Algorithm~\ref{alg:uniformquant}, given an entropy budget $R$, the input symbols are uniformly partitioned into $\lfloor 2^R \rfloor$ bins, and each bin is encoded with the same code.

Figure~\ref{fig:tradeoff} illustrates the trade-off between the average MDP reward and the receiver's decoding accuracy by varying \(\beta\), using our deterministic EBIM solver in Algorithm~\ref{alg:search}, and the uniform quantization encoder in Algorithm~\ref{alg:uniformquant}. Here, the compression rate is defined by the ratio of the message entropy to the allowed code budget \(H(T)\).

\begin{figure}[h!]
    \centering
    \includegraphics[width=\linewidth]{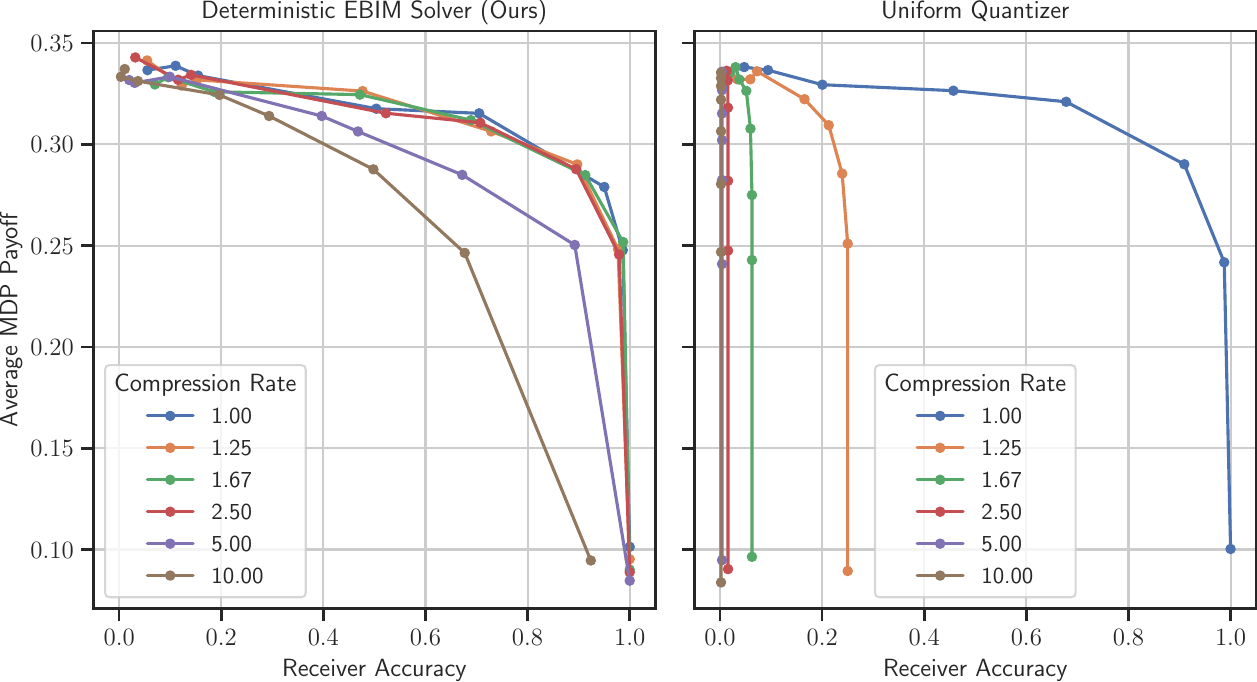}
    \caption{
    The trade-off between average MDP reward vs. receiver's accuracy, navigated by varying the value of $\beta$. Left: using our search algorithm for compression (Algorithm~\ref{alg:search}), Right: using uniform quantization in Algorithm~\ref{alg:uniformquant}. The message size is 512 with a uniform prior, and each data point is averaged over 200 episodes.
    }
    \label{fig:tradeoff}
\end{figure}

Figure~\ref{fig:accvssteps} illustrates the evolution of message belief over time for various values of \(\beta\) and rate budgets. A marginal policy optimized with a higher \(\beta\) prioritizes message accuracy over MDP payoff, as higher entropy of actions at each state provides more room for the agent to encode information about the message. Consequently, as observed, this leads to improved receiver accuracy in fewer steps. In addition, a lower compression rate permits the agent to retain more information about the message, enabling more effective encoding of information in the selected trajectory.

\begin{figure}[h!]
    \centering
    \includegraphics[width=\textwidth]{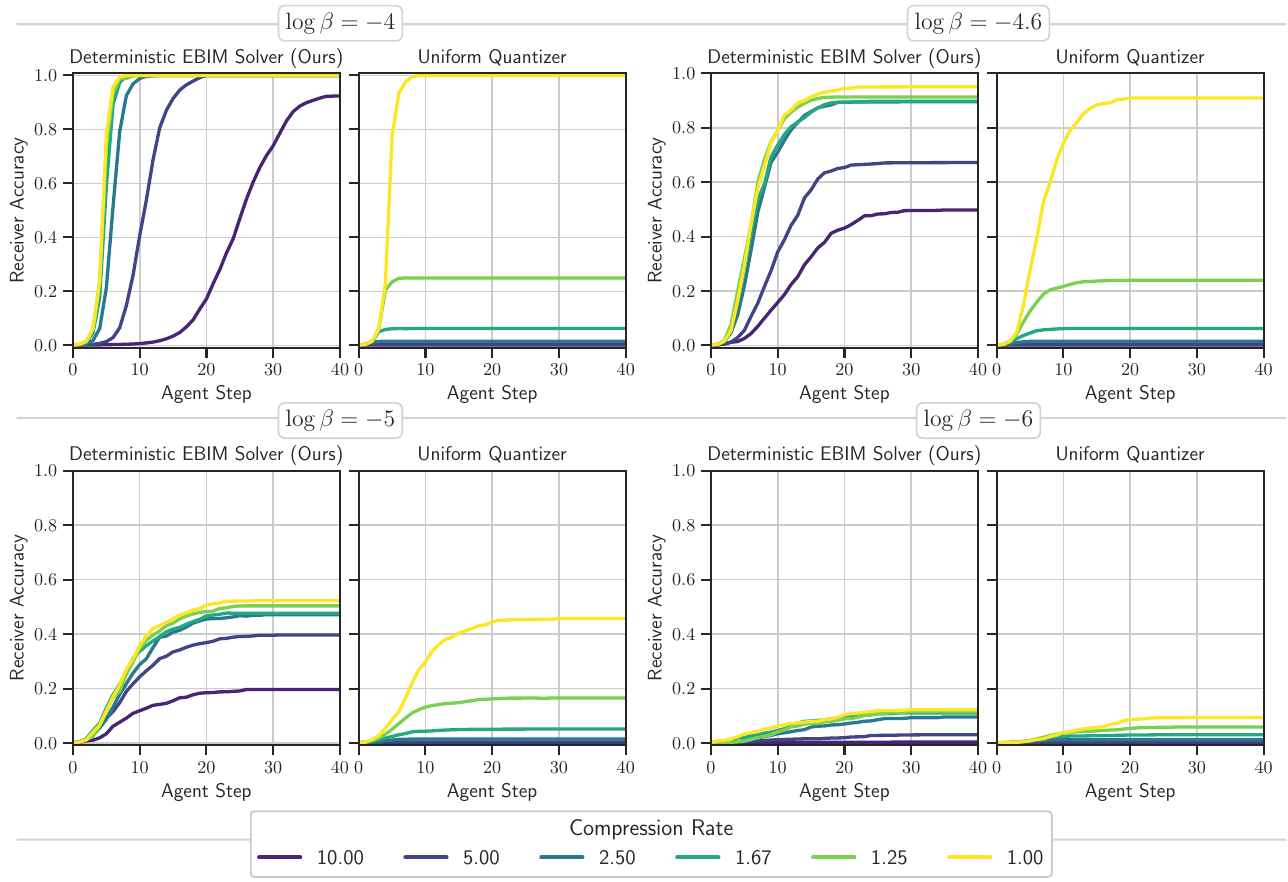}
    \caption{Evolution of message belief over time, for various values of $\beta$ and rate budget, using our search algorithm for compression in Algorithm~\ref{alg:search} vs. uniform quantization in Algorithm~\ref{alg:uniformquant}.}
    \label{fig:accvssteps}
\end{figure}

\FloatBarrier
\section{Conclusion}

We investigated a lossy compression framework under logarithmic loss, where the reconstruction distribution differs from the source distribution. This framework supports joint compression and retrieval applications, or more generally, cases where distributional shifts occur due to processing. We demonstrated that this framework effectively extends the classical minimum entropy coupling by incorporating a bottleneck, which regulates the degree of stochasticity in the coupling.

Furthermore, we showed that separately optimizing the encoder and decoder decomposes the Minimum Entropy Coupling with Bottleneck (MEC-B) into two distinct problems: Entropy-Bounded Information Maximization (EBIM) for the encoder, followed by Minimum Entropy Coupling (MEC) for the decoder. We conducted an extensive study of the EBIM problem, provided a functional mapping search algorithm with guaranteed performance, and characterized the optimal solution adjacent to functional mappings, offering valuable theoretical insights into the problem structure. To illustrate an application of MEC-B, we presented experiments on Markov Coding Games (MCGs) with rate limits. The results demonstrated the trade-off between MDP reward and receiver accuracy, with varying compression rates, compared to baseline compression schemes.

Future research could focus on quantifying the gap between the separate optimization of the encoder and decoder and the optimal joint setting.
Also, enabling fine-grained control over the entropy spread in the coupling can be key in some applications. Additionally, the application of Entropy-Bounded Information Maximization (EBIM) in watermarking language models \cite{kirchenbauer2023watermark} suggests a valuable intersection with state-of-the-art AI applications. Moreover, extending this framework to continuous cases could lead to the design of neural network architectures based on the proposed framework and provide information-theoretic insights into a broad spectrum of deep learning problems. These include unpaired sample-to-sample translation \cite{isola2016image, zhu2017unpaired, hoffman2018cycada}, joint compression and upscaling \cite{kang2019toward, liu2021lossy}, and the InfoMax framework \cite{tschannen2019mutual, hjelm2018learning}, among others.


\clearpage

\bibliographystyle{unsrtnat}
\bibliography{references}
\clearpage
\appendix
\section{Mathematical Proofs}\label{sec:proofs}

\subsection{Proof of Lemma~\ref{lemma:decomp}}

\firstlemma*

\begin{proof}
By multiple applications of the chain rule for mutual information, i.e.  $I(A;B,C)=I(A;B)+I(A;C|B)$, we have:
\begin{align}
    I(X;Y) &= I(X;Y,T) - I(X;T|Y) \\
    &= [I(X; T) + I(X; Y | T) ] - [ - I(T; Y) + I(T; X, Y) ] \\
    &= I(X;T) + I(Y;T) - I(T; X,Y).
\end{align}
Note that from the Markov chain property, we have $I(X; Y | T)=0$.
\end{proof}

        

\subsection{Proof of Theorem~\ref{thm:funcMapping}}\label{sec:thm1proof}

\firsttheorem*
\begin{proof}
    If such $g$ exists, let
    \[
    \Pxt^*(x, t) = 
    \begin{cases}
        p_X(x) & t = g(x) \\
        0   & \text{otherwise}
    \end{cases}
    \]
    This joint distribution effectively sets $T=g(X)$. Note that $\Pxt^*\in\mathcal{M}$ and we have $I(X;T)=H(T)-H(T|X)=H(g(X))=R$. Since $\obj(p_X, R)\leq R$, we conclude that $\obj(p_X, R) = R$ for $\Pxt = \Pxt^*$.

    Conversely, if $\obj(p_X, R) = R$, then there exists $\Pxt^*\in\mathcal{M}$ such that $I(X;T)=R$. Therefore
    \begin{gather*}
        H(T) 
        = I(X;T) + H(T|X) 
        = R + H(T|X) 
        \leq R \\
        \Rightarrow H(T|X) \leq 0
    \end{gather*}
    As a result, $H(T|X)=0$ and $H(T)=R$, which means $\Pxt^*$ defines a function $g$ such that $T=g(X)$, and $H(g(X))=H(T)=R$. 
\end{proof}

\subsection{Proof of Theorem~\ref{thm:guarantee}}\label{sec:thm2proof}

Before providing the formal proof, it is helpful to gain some insight into the structure of the solution first.

\begin{remark}\label{rem:searchalg}
    Let $n=\vert p_X\vert$. The procedure outlined in Algorithm~\ref{alg:search} establishes a series of deterministic mappings
    $ \Pl{0},\, \Ps{1},\, \Pl{1},\, \cdots,\, \Ps{n-1},\, \Pl{n-1}$,
    corresponding to a decreasing sequence of mutual information values 
    $ \Il{0},\, \Is{1},\, \Il{1},\, \cdots,\, \Is{n-1},\, \Il{n-1} $.
    The algorithm then picks the mapping with the maximum mutual information that does not exceed $R$.
    Therefore
    \begin{align}
        R - I(X;T) &\leq \max \left\{ \Il{0} - \Is{1},\;\; \Is{1} - \Il{1},\;\; \cdots,\;\; \Is{n-1} - \Il{n-1} \right\} \nonumber \\ 
        &\leq \max \left\{ \Il{0} - \Il{1},\;\; \Il{1} - \Il{2},\;\; \cdots,\;\; \Il{n-2} - \Il{n-1} \right\}.
    \end{align}    
\end{remark}

\begin{example}
    For $p_X = \rvect{0.4& 0.3& 0.2& 0.1}$, Algorithm~\ref{alg:search} traverses through the following deterministic mappings, from left to right:
\begin{table}[!h]
\resizebox{\linewidth}{!}{
    $
    \centering
    \arraycolsep=3pt
    \begin{array}{c | c c c c c c c}
    
        & \Pl{0} & \Ps{1} & \Pl{1} & \Ps{2} & \Pl{2} & \Ps{3} & \Pl{3} \\[5pt] \hline  \rule{0pt}{3\normalbaselineskip}
        \Pxt 
        & \begin{bmatrix}
            0.4 & 0 & 0 & 0\\
            0 & 0.3 & 0 & 0\\
            0 & 0 & 0.2 & 0\\
            0 & 0 & 0 & 0.1\\
        \end{bmatrix}
        & \begin{bmatrix} 
            0.4 & 0  & 0   \\
            0 &  0.3 & 0   \\
            0 &  0   & 0.2 \\
            0 &  0   & 0.1 \\
        \end{bmatrix}
        & \begin{bmatrix} 
            0.4 & 0 & 0 \\
            0.3 & 0 & 0 \\
            0 & 0.2 & 0 \\
            0 & 0 & 0.1 \\
        \end{bmatrix}
        & \begin{bmatrix}
            0.4 & 0\\
            0.3 & 0\\
            0 & 0.2\\
            0 & 0.1\\
        \end{bmatrix}
        & \begin{bmatrix}
            0.4 & 0\\
            0.3 & 0\\
            0.2 & 0\\
            0 & 0.1\\
        \end{bmatrix}
        & \begin{bmatrix}
            0.4\\
            0.3\\
            0.2\\
            0.1\\
        \end{bmatrix}
        & \begin{bmatrix}
            0.4\\
            0.3\\
            0.2\\
            0.1\\
        \end{bmatrix}\\ [28pt]

        I\left(X;T\right) & \ent{p_X} &  \ent{\rvect{.4 & .3 & .3}} & \ent{\rvect{.7 & .2 & .1}} & \ent{\rvect{.7 & .3}} & \ent{\rvect{.9 & .1}} & 0 & 0\\\rule{0pt}{0.2\normalbaselineskip}
        
    \end{array}
    $
}
\end{table}
\FloatBarrier
\end{example}

\begin{definition}\label{def:deltaH}
    Let $P'$ be a probability distribution resulted from merging two elements $p > 0$ and $q > 0$ in an original distribution $P$, i.e.
    $P = \rvect{\cdots & p & \cdots & q & \cdots}$ and $P' = \rvect{\cdots & p+q & \cdots}$. Then, the amount of decrease in the entropy from this merge operation is characterized by:
    \begin{align}
        \DeltaH{p}{q} &= \ent{P} - \ent{P'} \\
        &= p\log\frac{1}{p} + q\log\frac{1}{q} - (p+q)\log\frac{1}{p+q} \\
        &= p\log\left(1+\frac{q}{p}\right) + q\log\left(1+\frac{p}{q}\right).
    \end{align}
\end{definition}

\begin{restatable}{lemma}{secondlemma}\label{lem:DeltaH}
    The following properties hold for the function $\DeltaHsymbol$:
    \begin{enumerate}
        \item $\DeltaH{\cdot}{\cdot}$ is monotonically increasing in both arguments. \label{lem:DeltaH1}
        \item $\DeltaH{\cdot}{\cdot}$ is concave.
        \item $\DeltaH{p}{1-p} = h(p)$. \label{lem:DeltaH3}
    \end{enumerate}
\end{restatable}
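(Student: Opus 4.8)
The plan is to treat all three claims by direct calculus on the closed form $\DeltaH{p}{q} = p\log(1+q/p) + q\log(1+p/q)$ over the open domain $p,q > 0$, working in order of increasing difficulty. Since the base of the logarithm enters only as a fixed positive multiplicative constant, it changes neither the signs of derivatives nor the definiteness of the Hessian, so I will compute freely and read off signs. Property~\ref{lem:DeltaH3} is immediate: setting $q = 1-p$ makes $p+q = 1$, so in the defining form $p\log\frac1p + q\log\frac1q - (p+q)\log\frac{1}{p+q}$ the last term vanishes and the expression reduces to $-p\log p - (1-p)\log(1-p) = h(p)$.

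For the monotonicity claim (property~\ref{lem:DeltaH1}), I would exploit the symmetry $\DeltaH{p}{q} = \DeltaH{q}{p}$ and differentiate only in $p$. Rewriting $\DeltaH{p}{q} = -p\log p - q\log q + (p+q)\log(p+q)$ gives the clean first derivative
\[
\frac{\partial}{\partial p}\DeltaH{p}{q} = \log\frac{p+q}{p} = \log\!\left(1 + \tfrac{q}{p}\right),
\]
which is strictly positive whenever $q > 0$. By symmetry the partial in $q$ equals $\log(1+p/q) > 0$, so $\DeltaHsymbol$ strictly increases in each argument.

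The concavity claim is the main step, and I would obtain it from the Hessian of the same first derivatives:
\[
\partial_{pp}\DeltaHsymbol = \frac{-q}{p(p+q)}, \qquad \partial_{qq}\DeltaHsymbol = \frac{-p}{q(p+q)}, \qquad \partial_{pq}\DeltaHsymbol = \frac{1}{p+q}.
\]
Both diagonal entries are negative, and the determinant is $\frac{1}{(p+q)^2} - \frac{1}{(p+q)^2} = 0$; a symmetric $2\times 2$ matrix with negative trace and zero determinant is negative semidefinite, which gives joint concavity.

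The only real subtlety — and the point I expect to require the most care — is that the Hessian is merely negative \emph{semi}definite: its determinant vanishes identically, with kernel spanned by $(p,q)$. This is not accidental but reflects that $\DeltaHsymbol$ is positively homogeneous of degree one, hence linear along rays through the origin, so a genuine zero eigenvalue is unavoidable and semidefiniteness is the best outcome possible. A cleaner, more conceptual alternative that avoids the Hessian entirely is the identity $\DeltaH{p}{q} = (p+q)\,h\!\big(\tfrac{p}{p+q}\big)$, which exhibits $\DeltaHsymbol$ as the perspective of the concave binary-entropy function $h$ composed with the linear change of variables $(p,q)\mapsto(p,\,p+q)$; since the perspective of a concave function is jointly concave and concavity is preserved under affine substitution, concavity follows at once. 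I would present the Hessian computation as the self-contained argument and note this perspective identity as a conceptual confirmation.
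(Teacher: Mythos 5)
Your proposal is correct and follows essentially the same route as the paper: the same first-derivative computation $\log(1+q/p)$ for monotonicity, the same Hessian for concavity (you read off negative semidefiniteness from the zero determinant and negative trace, the paper from the explicit eigenvalues $0$ and the trace — an equivalent step), and the same direct substitution for $\DeltaH{p}{1-p}=h(p)$. The perspective-function identity $\DeltaH{p}{q}=(p+q)\,h\bigl(\tfrac{p}{p+q}\bigr)$ and the homogeneity remark explaining the unavoidable zero eigenvalue are nice conceptual additions not in the paper, but your core argument coincides with its proof.
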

\begin{proof}
    The properties are derived through straightforward derivative calculations:
    \begin{enumerate}
        \item $\frac{\partial}{\partial p}\DeltaHsymbol = \log (1+q/p) \geq 0$, and 
              $\frac{\partial}{\partial q}\DeltaHsymbol = \log (1+p/q) \geq 0$.
        \item The Hessian of $\DeltaHsymbol$ is negative semidefinite:
        \begin{equation}
            \mathbf{H}_{\DeltaHsymbol} = \frac{1}{p+q} \begin{bmatrix} 
            -q/p & 1  \\
            1 & -p/q  \\
            \end{bmatrix},
        \end{equation}
        with eigenvalues $\lambda_1=0$ and $\lambda_2 = - (\frac{q}{p}+\frac{p}{q})(\frac{1}{p+q}) < 0$.
        \item $\DeltaH{p}{1-p} = -p\log p - (1-p)\log(1-p) = h(p)$.
        
    \end{enumerate}
\end{proof}

\secondtheorem*

\begin{proof}
    For the gap to the optimal objective, $\obj(p_X, R) - \widehat{I}$, we have:
    \newcommand{\maxi}{\max\limits_{i\in \{1,\cdots,n-1\}}\ }
    \begin{alignat*}{2}
        \setlength{\arraycolsep}{-10pt}
        &\cmt{\text{Equation (\ref{eq:objUpper})}} \hspace{2em} \obj(p_X, R) - \widehat{I}                                      
        &&\leq R - \widehat{I} \\
        &\cmt{\text{Remark~\ref{rem:searchalg}}}
        &&\leq \maxi \Il{i-1} - \Il{i} \\
        &\cmt{\text{Algorithm~\ref{alg:search}, Line~\ref{alg:Il}}}
        &&= \maxi \ent{ \xsum_x \Pl{i-1} } - \ent{ \xsum_x \Pl{i} } \\
        &\cmt{\text{Definition~\ref{def:deltaH}}}
        &&= \maxi \DeltaH{\xsum_{k=1}^i p_k}{p_{i+1}} \\
        &\cmt{\text{Lemma~\ref{lem:DeltaH}}.\ref{lem:DeltaH1}}       
        &&\leq \maxi \DeltaH{\xsum_{k=1}^i p_k + \xsum_{k=i+2}^{n} p_k}{p_{i+1}} \\
        &                                                           
        &&= \maxi \DeltaH{1 - p_{i+1}}{p_{i+1}} \\
        &\cmt{\text{Lemma~\ref{lem:DeltaH}}.\ref{lem:DeltaH3}}       
        &&= \maxi \ h\left(p_{i+1}\right) \\
        &\cmt{p_2,\, p_3,\, \cdots,\, p_n \leq 0.5}                           
        &&= h(p_2).
    \end{alignat*}
\end{proof}

Note that the above bound on the optimality of the proposed algorithm is by no means tight, as it does not account for the intermediate distributions $\Ps{i}$.


\subsection{Proof of Theorem~\ref{thm:neighbor}}

\thirdtheorem*

\begin{example}    
Figure \ref{fig:didh2} depicts an example of optimal solutions in the neighborhood of a deterministic mapping.

\begin{figure}[h] 
\centering
\includegraphics[width=0.4\linewidth]{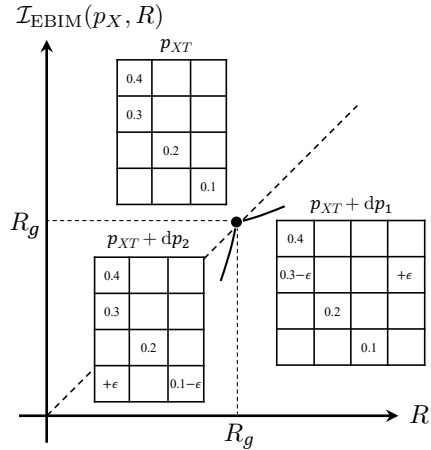}
\caption{Optimal solutions in the neighborhood of a deterministic mapping.}\label{fig:didh2} 
\end{figure}

\FloatBarrier
\end{example}

\begin{proof}
Let us view the joint distribution as a $n\times m$ matrix $\Pxt$. Note that:
\begin{align}
    \label{eq:g2Pxt}
    \Pxt (x, t) = 
    \begin{cases}
        p_X(x) & t = g(x) \\
        0 & \text{otherwise}
    \end{cases}
\end{align}

For example:
\begin{align}
    \label{eq:example}
    \Pxt = 
    \begin{bmatrix}
        0.4 & 0 & 0 & 0 \\
        0.3 & 0 & 0 & 0 \\
        0 & 0.2 & 0 & 0 \\
        0 & 0 & 0.1 & 0 \\
    \end{bmatrix}
    , \quad g(x) = 
    \begin{cases}
        1, & x = 1, 2 \\
        2, & x = 3 \\
        3, & x = 4 \\
    \end{cases}
\end{align}

Consider a perturbation $\dP \in \mathbb{R}^{n \times m}$ to $ \Pxt $. For $\Pxt + \dP$ to be a valid distribution in $\mathcal{M}$, we need:
\begin{enumerate}
    \item $ \xsum_{t} \dP(x, t) = 0, \ \ \forall x \in \mathcal{X} $ \vspace{-5pt}
    \item $ \dP(x,t) \geq 0, \ \ \forall x, t \ \ \text{s.t.}  \ \ t \neq g(x) $
    \item $ \dP(x,t) \leq 0,  \ \ \forall x, t \ \ \text{s.t.}  \ \ t = g(x) $
\end{enumerate}
We define the set of all such perturbations as $\Omega\subset \mathbb{R}^{n \times m} $. Next, let us define basis perturbations $ \Delta_{x,t} $ for $t \neq g(x)$ as:
\begin{align}
    \left[ \Delta_{x,t} \right]_{ij} = \label{eq:delta}
    \begin{cases}
      -\dpp, & \text{if} \ \ i=x, \ j=g(x) \\
      +\dpp, & \text{if} \ \ i=x, \ j=t \\
      0, & \text{otherwise}
    \end{cases}
\end{align}
Note that $ \Delta_{x,t} $ represents moving a probability mass of $\dpp$ from non-zero cell $ \left( x, g(x) \right)  $ to empty cell $ \left( x, t \right) $.
For the example in equation \eqref{eq:example}:
\[
    \Delta_{2,3} = 
    \begin{bmatrix}
        0 & 0 & 0 & 0\\
        -\dpp & 0 & +\dpp & 0\\
        0 & 0 & 0 & 0\\
        0 & 0 & 0 & 0\\
    \end{bmatrix}
\]
The significance of these bases is that any perturbation in $\Omega$ can be represented as:
\begin{align}
    \dP = \sum_{\substack{ x, t \\ t \neq g(x) }} \alpha_{x,t} \ \Delta_{x, t},
\end{align}
with coefficients $ \alpha_{x,t} \geq 0 $. For example:
\[
    \begin{bmatrix}
        0 & 0 & 0 & 0 \\
        -3\dpp & +2\dpp & +\dpp & 0 \\
        0 & 0 & 0 & 0 \\
        0 & 0 & 0 & 0 \\
    \end{bmatrix}
    = 2 \times \Delta_{1, 1} + \Delta_{1, 2}.
\]

Realizing  $I_{XT}$, $H_{XT}$, and $H_T$ as functions of a joint distribution, we are interested in calculating the ratio $ \dif{I_{XT}} / \dif{H_T}$ with respect to a perturbation $\dP\in\Omega$ as $\dpp \to 0$ at $\Pxt$. Note that since for any $\dP\in\Omega$, $\dif{H_X}=0$, we have:
\begin{align*}
    \frac{\dif{I_{X,T}}}{\dif{H_T}} 
    = \frac{\dif{H_X} + \dif{H_T} - \dif{H_{X,T}}}{\dif{H_T}} 
    = 1 - \frac{\dif{H_{X,T}}}{\dif{H_T}}. 
\end{align*}
Therefore
\spaced{
\begin{align*}
    \cfrac{\dif{I_{X,T}}}{\dif{H_T}}
    &=  1 - \cfrac{ \dif{H_{X,T}}\left(\Pxt, \dP \right) }{\dif{H_T}\left(\Pxt, \dP \right)} \\
    &=  1 - \cfrac{\dif{H_{X,T}}\left(\Pxt, \ddsum \alpha_{x,t} \ \Delta_{x,t} \right) }{ \dif{H_T}\left(\Pxt, \ddsum \alpha_{x,t} \ \Delta_{x,t} \right)} \\
    &=  1 - \frac{ \ddsum \alpha_{x,t}\ 
    {\dif{H_{X,T}}\left(\Pxt, \Delta_{x,t} \right)}
    }{\ddsum \alpha_{x,t}\ 
    {\dif{H_T}\left(\Pxt, \Delta_{x,t} \right)}
    }.
    \numberthis \label{eq:dIdH}
\end{align*}
}

$\dif{H_{X,T}}\left(\Pxt, \Delta_{x,t} \right)$ represents the amount of change in the joint entropy, when an infinitesimal mass of $\dpp$ is moved from $(x, g(x))$ to $(x, t)$. More precisely, from (\ref{eq:g2Pxt}) and (\ref{eq:delta}):
\spaced{
\begin{align*}
    {\dif{H_{X,T}}\left(\Pxt, \Delta_{x,t} \right)} 
    &= H_{X,T} \left( \Pxt + \Delta_{x,t} \right) - H_{X,T} \left( \Pxt \right) \\
    &= \left[ - (p_X(x) - \dpp) \log(p_X(x) - \dpp)  - \dpp \log \dpp \right] - \left[ - p_X(x) \log p_X(x) \right] \\
    &= p_X(x) \log\cfrac{p_X(x)}{p_X(x)-\dpp}  + \dpp \log\cfrac{p_X(x)-\dpp}{\dpp} \\
    &= \dpp + \mathcal{O}(\dpp^2) + \dpp \log\cfrac{p_X(x)}{\dpp}  \, . \numberthis \label{eq:dI}
\end{align*}}
The last line uses the fact that for small enough $x$, $f(x)=a\log\frac{a}{a-x}=x+\mathcal{O}(x^2).$
Similarly:
\spaced{
\begin{align*}
    \dif{H_{T}}\left(\Pxt, \Delta_{x,t} \right)
    &= H_{T} \left( \Pxt + \Delta_{x,t} \right) - H_{T} \left( \Pxt \right) \\
    \begin{split}
    = \Big[\!-\!\Big( p_T \big( g(x) \big) - \dpp \Big) \log \Big( p_T\big(g(x)\big) - \dpp \Big)\!-\!\Big( p_T(t) + \dpp \Big) \log \Big( p_T(t) + \dpp \Big) \Big] \\
    \hspace{3ex} -\Big[ - p_T \big( g(x) \big) \log  p_T\big(g(x)\big) - p_T(t) \log p_T(t)   \Big] 
    \end{split}
    \\
    &=  p_T\big(g(x)\big) \log\cfrac{p_T\big(g(x)\big)}{p_T\big(g(x)\big)-\dpp}
    + p_T(t) \log\cfrac{p_T(t)}{p_T(t)+\dpp}
    + \dpp \log\cfrac{p_T\big(g(x)\big) - \dpp}{p_T(t) + \dpp} \\
    &= \dpp + \mathcal{O}(\dpp^2) -\dpp + \mathcal{O}(\dpp^2) + \dpp \log\cfrac{p_T\big(g(x)\big)}{p_T(t) + \dpp} \\
    &= \log\cfrac{p_T\big(g(x)\big)}{p_T(t) + \dpp} + \mathcal{O}(\dpp^2). \numberthis \label{eq:dH}
\end{align*}}

Plugging (\ref{eq:dI}) and (\ref{eq:dH}) back to (\ref{eq:dIdH}), we will get:
\spaced{
\begin{align*}
    \cfrac{\dif{I_{X,T}}}{\dif{H_T}}
    &= 1 - \cfrac{\ddsum \alpha_{x,t} \left[ \dpp + \dpp \log\cfrac{p_X(x)}{\dpp} + \mathcal{O}(\dpp^2) \right]}{\ddsum \alpha_{x,t} \left[ \log \cfrac{p_T\big(g(x)\big)}{p_T(t)+\dpp} + \mathcal{O}(\dpp^2) \right]} \\
    &= 1 - \cfrac{\ddsum \abar_{x,t} \log \cfrac{p_X(x)}{\dpp} }{ \ddsum \abar_{x,t} \log \cfrac{p_T\big( g(x)\big)}{p_T(t)+\dpp}} \ \ ,
    \numberthis\label{eq:dIdHlimit}
\end{align*}
}
where $\alpha_{x,t}$ is normalized by: \[ \abar_{x,t} = \frac{\alpha_{x,t}}{\ddsum \alpha_{x,t}}. \]

Let's focus on the limit of (\ref{eq:dIdHlimit}) when $\dpp\to 0$: If there is any $t\in\mathcal{T}$ with $p_T(t)=0$ and \mbox{$\abar_{x, t}>0$}, the denominator of the second term will grow without bound, otherwise the denominator remains bounded.  Therefore, for the limit of  (\ref{eq:dIdHlimit})  we have:
\begin{equation}
    \limdp \cfrac{\dif{I_{X,T}}}{\dif{H_T}} 
    =\begin{cases}
        -\infty & \text{if} \hspace{10pt} \abar_{x,t} = 0 \hspace{10pt} \forall t \text{ s.t. }  p_T(t)=0 \\[3pt]
        1-\Big(\doublesum{x}{\ \ p_T(t)=0}\ \abar_{x, t} \Big)^{-1} & \text{if} \hspace{10pt} \exists t: \abar_{x,t}>0 \text{ and } p_T(t)=0 
    \end{cases} \label{eq:dIdHfinal}
\end{equation}

For $ \dif{H_T}>0 $, we need to find a perturbation (i.e. coefficients $ \alpha_{x,t} $) that maximizes $ \dif{I_{XT}} / \dif{H_T} $. From (\ref{eq:dIdHfinal}), this means $\exists t\in\mathcal{T}$ with $\abar_{x,t}>0$ and $p_T(t)=0$.
\begin{align*}
    \abar 
    = \argmax_{\abar} \cfrac{\dif{I_{X,T}}}{\dif{H_T}}
    &= \argmax_{\abar} \ 1-\Big(\doublesum{x}{\ \ p_T(t)=0}\ \abar_{x, t} \Big)^{-1} \\[5pt]
    &= \argmax_{\abar} \doublesum{x}{\ \ p_T(t)=0}\ \abar_{x, t} 
\end{align*}
Therefore, $\doublesum{x}{\ \ p_T(t)=0}\ \abar_{x, t}=1$ which means $\abar_{x, t}=0$ if $p_T(t)>0$. In other words, we should only consider perturbations where masses are moved to all-zero columns. Continuing (\ref{eq:dIdHlimit}):
\spaced{
\begin{align*}
    \abar 
    &= \argmax_{\abar} \ \frac{\dif{I_{X,T}}}{\dif{H_T}} \\
    &= \argmax_{\abar} \ \ 1 - \cfrac{\doublesum{x}{\ \ p_T(t)=0}  \abar_{x,t} \log \cfrac{p_X(x)}{\dpp} }{ \doublesum{x}{\ \ p_T(t)=0}  \abar_{x,t} \log \cfrac{p_T\big( g(x)\big)}{\dpp}} \\
    &= \argmax_{\abar} \ \ 1 - \cfrac{ - \log\dpp + \doublesum{x}{\ \ p_T(t)=0} \abar_{x,t} \log p_X(x) }{ - \log\dpp + \doublesum{x}{\ \ p_T(t)=0} \abar_{x,t} \log {p_T\big( g(x)\big)}} \\
    &= \argmax_{\abar} \ \ \cfrac{-1}{\log\dpp} 
    \left[ \doublesum{x}{\ \ p_T(t)=0} \abar_{x,t} \log\cfrac{p_T\big( g(x)\big)}{p_X(x)} \right] \\[5pt]
    &= \argmin_{\abar} \ \ \doublesum{x}{\ \ p_T(t)=0} \abar_{x,t} \log\cfrac{p_X(x)}{p_T\big( g(x)\big)} \ .
\end{align*}}
This is achieved by selecting
\begin{align}
    \Rightarrow \alpha_{x,t} = 
    \begin{cases}
        1, & x=\argmin\limits_{x'} \cfrac{p_X(x')}{p_T\big(g(x')\big)} \ \text{ and } \ p_T(t)=0  \\
        0, & \text{otherwise}
    \end{cases}
\end{align}
In other words, first, normalize columns in $ \Pxt $ by their sum, then move an infinitesimal probability mass from the cell with the smallest normalized value to an all-zero column. It is easy to confirm that $\dif{H}_T > 0$ for such a perturbation. For the example distribution in (\ref{eq:example}):
\begin{align}
    \Pxt + \dP = 
    \begin{bmatrix}
        0.4 & 0 & 0 & 0 \\
        0.3-\dpp & 0 & 0 & \dpp \\
        0 & 0.2 & 0 & 0 \\
        0 & 0 & 0.1 & 0 \\
    \end{bmatrix}
\end{align}

On the other hand, for $ \dif{H_T}<0 $, we need to find a perturbation (i.e. coefficients $ \alpha_{x,t} $) that minimizes $ \dif{I_{XT}} / \dif{H_T} $. From (\ref{eq:dIdHfinal}), this means $\abar_{x,t} = 0$ for all $t\in\mathcal{T}$ that $p_T(t)=0$. Therefore, as in (\ref{eq:dIdHlimit}):

\spaced{
\begin{align*}
    \abar 
    &= \argmin_{\abar} \ \ \frac{\dif{I_{X,T}}}{\dif{H_T}} \\[5pt]
    &= \argmin_{\abar} \ \ 1 - \cfrac{\ddsum \abar_{x,t} \log \cfrac{p_X(x)}{\dpp} }{ \ddsum \abar_{x,t} \log \cfrac{p_T\big( g(x)\big)}{p_T(t)}} \\[5pt]
    &= \argmin_{\abar} \ \ 1 - \cfrac{-\log\dpp + \ddsum \abar_{x,t} \log {p_X(x)} }{ \ddsum \abar_{x,t} \log \cfrac{p_T\big( g(x)\big)}{p_T(t)}} \\
    &= \argmin_{\abar} \  { \ddsum \abar_{x,t} \log \cfrac{p_T\big( g(x)\big)}{p_T(t)}} .
\end{align*}}
This is achieved by selecting
\begin{align}
    \Rightarrow \abar_{x,t} = 
    \begin{cases}
        1, & x=\argmin\limits_{x'} {p_T\big(g(x')\big)} \ \text{ and } \ t = \argmax\limits_{t'} p_T(t')  \\
        0, & \text{otherwise}
    \end{cases}
\end{align}
In other words, moving an infinitesimal probability mass from the smallest column to the largest column of $ \Pxt $. It is easy to confirm that $\dif{H}_T < 0$ for such a perturbation. For the example distribution of (\ref{eq:example}):
\begin{align}
    \Pxt + \dP = 
    \begin{bmatrix}
        0.4 & 0 & 0 & 0 \\
        0.3 & 0 & 0 & 0 \\
        0 & 0.2 & 0 & 0 \\
        \dpp & 0 & 0.1 -\dpp & 0 \\
    \end{bmatrix}
\end{align}

\end{proof}



\section{Minimum Entropy Coupling}\label{sec:MEC}

Consider two discrete random variables $X$ and $Y$, over alphabets $\mathcal{X}$ and $\mathcal{Y}$ with probability mass functions $p_X$ and $p_Y$, respectively. The goal of minimum entropy coupling is to find the joint distribution $p_{XY}$ that minimizes the joint entropy $H(X,Y)$:

\begin{equation} 
\begin{aligned} \label{eq:minent}
    \min_{p_{XY}} \ &H(X;Y) \\
    \textrm{s.t.} \
        &\sum_{y\in\mathcal{Y}} p_{XY}(x, y) = p_X(x) \quad \forall x\in{\mathcal{X}},  \\
        &\sum_{x\in\mathcal{X}} p_{XY}(x, y) = p_Y(y) \quad \forall y\in{\mathcal{Y}}
\end{aligned}
\end{equation}

This is a concave minimization problem over a standard polyhedron \cite{Mangasarian1996}. Therefore, every vertex of the polyhedron is a local minimum and the global minimum happens at a subset of the vertices.

Note that an standard polyhedron is defined as $\mathcal{P}=\{\bm{x} \in \reals^n | \ \bm{A}\bm{x} = \bm{b}, \ \bm{x} \geq \bm{0}\}$, where $\bm{A} \in \reals ^{m \times n}$ with linearly independent rows. A point $\bm{x}^* \in \mathcal{P}$ is a vertex if and only if it has $n-m$ zero elements and columns of $\bm{A}$ corresponding to other $m$ non-zero elements are linearly independent. Hence, to exhaustively iterate all the vertices:
\begin{enumerate}
    \item Choose $m$ linearly independent columns $\bm{A}_{\pi(1)}, \cdots, \bm{A}_{\pi(m)}$.
    \item Let $\bm{x}_i= 0$ for all  $i \in \pi(1),..., \pi(m)$
    \item Solve the system of $m$ equations $\bm{A}\bm{x} - \bm{b}=0$ for the unknowns $\bm{x}_\pi(1), \cdots, \bm{x}_\pi(m)$
\end{enumerate}

Therefore a crude upper-bound on the number of vertices would be $\binom{n} {m}$. This can be enhanced to $\binom{n-m/2}{m/2}$ \cite{goemans} which is still exponential in $m$. 
Next, we will show that the minimum entropy coupling problem as defined in (\ref{eq:minent}) is essentially NP-Hard. This is done by reduction from another NP-complete problem, the k-Subset-Sum (see \cite{kovavcevic2015entropy} for more details).

\begin{remark} The minimum entropy coupling problem in (\ref{eq:minent}) is NP-Hard \cite{kovavcevic2015entropy}.
\end{remark}
\begin{proof}
To show an optimization problem is NP-Hard, we need to show the corresponding decision problem is NP-Hard.  Given an optimization problem, a decision version is whether or not any target value $t$ is achievable. Without the loss of generality, assume $|\mathcal{X}| > |\mathcal{Y}|$. We set $t=H(Y)$, i.e. to decide if there exists a function $f: \mathcal{X}\to\mathcal{Y}$ such that $Y=f(X)$. Let's call this problem \textit{Deterministic Matching}.

Next, we show any instance of the $k$-Subset-Sum problem can be reduced to an instance of Deterministic Matching, by a polynomial-time procedure (denoted by the notation $<_p$). Consider a general instance of the $k$-Subset-Sum problem: Given set $\mathcal{S}$ of integers and target values $\{t_i|1\leq i\leq k\}$, decide if there exists a partition $\{\mathcal{S}_i|1\leq i\leq k \}$ of size $k$ on $\mathcal{S}$ such that $\sum \mathcal{S}_i = t_i$ for all $1\leq i\leq k$. 
Now, set $p_X(i)=s_i/\sum(\mathcal{S}), \forall s_i \in \mathcal{S}$ and $p_Y(i)=t_i / \sum(t_j)$. Then, clearly solving Deterministic Matching for $p_X, p_Y$ will solve the original $k$-Subset-Sum problem. Therefore, $k$-Subset-Sum $<_p$ Deterministic Matching and hence, Deterministic Matching is NP-Hard. Consequently, Minimum Entropy Coupling is an NP-Hard optimization problem.
\end{proof}

Finally, we introduce two linear-time approximate greedy algorithms for the minimum entropy coupling problem, and numerically compare their achieved minima to a general approximate algorithm.

\vspace{0.5em}

\begin{algorithm}[H]
\caption{Max-Seeking Minimum Entropy Coupling}\label{alg:maxseekMEC}
\begin{algorithmic}[1]
    \Require marginal distributions $p_X$, $p_Y$
    \Ensure joint distribution $p_{XY}$
    \vspace{2pt}
    \State $p_{XY}(x, y) \gets 0, \quad \forall x, y \in \mathcal{X}, \mathcal{Y}$ 
    \While{$p_X, p_Y \neq \bm{0}$}
        \State $x^* \gets \argmax_{x} p_X(x)$ 
        \State $y^* \gets \argmax_{y} p_Y(y)$ 
        \State $p_{XY}(x^*, y^*) \gets \min\{p_X(x^*), p_Y(y^*)\}$ 
        \State $p_X(x^*) \gets  p_X(x^*) - \min\{p_X(x^*), p_Y(y^*)\}$ 
        \State $p_Y(y^*) \gets  p_Y(y^*) - \min\{p_X(x^*), p_Y(y^*)\}$ 
    \EndWhile
    \State\Return $p_{XY}$
\end{algorithmic}
\end{algorithm}

\begin{algorithm}[H]
\caption{Zero-Seeking Minimum Entropy Coupling}\label{alg:zeroseeking}
\begin{algorithmic}[1]
    \Require marginal distributions $p_X$, $p_Y$
    \Ensure joint distribution $p_{XY}$
    \vspace{2pt}
    \State $p_{XY}(x, y) \gets 0, \quad \forall x, y \in \mathcal{X}, \mathcal{Y}$
    \While{$p_X, p_Y \neq \bm{0}$}
        \State $(x^*, y^*) \gets \argmin_{x, y}|p_X(x)-p_Y(y)|$
        \State $p_{XY}(x^*, y^*) \gets \min\{p_X(x^*), p_Y(y^*)\}$
        \State $p_X(x^*) \gets  p_X(x^*) - \min\{p_X(x^*), p_Y(y^*)\}$
        \State $p_Y(y^*) \gets  p_Y(y^*) - \min\{p_X(x^*), p_Y(y^*)\}$
    \EndWhile
    \State\Return $p_{XY}$
\end{algorithmic}
\end{algorithm}

At each step, each algorithm selects a symbol from each random variable and connects them in the joint distribution by assigning the higher probability of the two symbols, updating the marginals accordingly. The max-seeking version targets the symbols with the largest remaining probability mass at each step, whereas the zero-seeking version pairs symbols with the most similar probability mass. Furthermore, the greedy algorithm described by \citet{kocaoglu2017entropic} resembles the max-seeking version outlined in Algorithm \ref{alg:maxseekMEC}.

As a simple baseline, we randomly generated 100 pairs of joint distributions and fed them to our greedy solvers. We also used a general concave minimization method, Successive Linearization Algorithm (SLA) \cite{palacios1982nonlinear}, and compared the achieved joint entropy. Table \ref{table:maxIresult} summarizes the average joint entropy over 100 trials for each method.

\begin{table}[h]
\centering
\caption{Minimum Entropy Coupling: average achieved joint entropy of 100 simulations of marginal distributions.}
\label{table:maxIresult}
\begin{tabular}{lc}
    \toprule
    \textbf{Name}           & \textbf{Entropy} \\ 
    \midrule
    Independent Joint \hspace{4em}       & $5.443 \pm 0.101$  \\ 
    SLA                     & $3.225 \pm 0.141$  \\ 
    Max-Seeking Greedy      & $2.946 \pm 0.064$  \\
    Zero-Seeking Greedy     & $2.937 \pm 0.058$  \\
    \bottomrule
\end{tabular}
\end{table}

\section{Markov Coding Games}

\subsection{Backgrounds and Notations}\label{sec:rlnotations}

\paragraph{Markov Decision Process}  MDPs are represented by the tuple notation $(\mathcal{S}, \mathcal{A}, \mathcal{R}, \mathcal{T})$, where $\mathcal{S}$ is the state space, $\mathcal{A}$ denoteds the action space, $\mathcal{R}: \mathcal{S} \times \mathcal{A} \rightarrow \mathbb{R}$ defines the reward function, and $\mathcal{T}: \mathcal{S} \times \mathcal{A} \rightarrow \mathcal{P}(\mathcal{S})$ represents the transition function. The way an agent interacts with an MDP is determined by its policy $\pi: \mathcal{S} \rightarrow \mathcal{P}(\mathcal{A})$, which assigns distributions over actions for each state. Our main focus is on episodic MDPs, which terminate after a limited sequence of transitions. The sequence of states and actions, called a trajectory, is recorded as $z = (s_0, a_0, \ldots, s_T)$. The notation $R(z) = \sum_t \gamma^t R(s_t, a_t)$ is used to represent the total rewards accrued throughout a sequence. The primary aim of an MDP is to devise a policy that maximizes the expected cumulative reward $\mathbb{E}[R(Z) | \pi]$.

\paragraph{Maximum Entropy Reinforcement Learning} a policy that exhibits a high degree of randomness is preferred in certain situations. Under these circumstances, the maximum-entropy RL objective

\begin{align}
    \max_{\pi} \mathbb{E}_{\pi} \left[ \sum_{t} R(S_t, A_t) + \beta H(A_t | S_t) \right] 
\end{align}

serves as a compelling substitute to the conventional goal of maximizing expected aggregate rewards \cite{ziebart2008maximum}. This objective trades off expected returns with conditional entropy of the selected policy, modulated by the temperature hyperparameter $\beta$. A generalization of the Q-value iteration method for maximum-entropy RL objective (also known as soft Bellman equation \cite{sutton2018reinforcement}) is shown in Algorithm~\ref{alg:softq}.

\begin{algorithm}
\caption{Soft Q-Value Iteration}\label{alg:softq}
\begin{algorithmic}[1]
\State \textbf{Input:} MDP, $\beta$
\State \textbf{Initialize:} $\pi_0$ to any policy
\State $i \gets 0$
\Repeat
    \State $Q^{i+1}_{\text{soft}}(s,a) \gets R(s,a) + \gamma \sum_{s'} \text{Pr}(s'|s,a) \ \stackMath\stackunder[0.1pt]{\widetilde{\max}_\beta}{_{a'}} \ Q^{i}_{\text{soft}}(s',a')$
    \State $i \gets i + 1$
\Until{$\|Q^{i}_{\text{soft}}(s,a) - Q^{i-1}_{\text{soft}}(s,a)\|_{\infty} \leq \epsilon$}
\State \textbf{Extract policy:} $\pi_{\text{greedy}}(\cdot |s) = \text{softmax}\left(\altfrac{Q^{i}_{\text{soft}}(s,\cdot)}{\beta}\right)$
\end{algorithmic}
\end{algorithm}

The soft maximum operator is defined as $\stackMath\stackunder[0.1pt]{\widetilde{\max}_\beta}{_a} \ Q(s,a) = \beta\log\sum_a \exp \left( \frac{Q(s,a)}{\beta} \right)$.

\subsection{Environment Setup} \label{sec:gridworld}

For our experiments, we utilize a simple environment known as \textit{Grid World}, for the Markov Decision Process. In this setup, the agent is placed on an \(8 \times 8\) grid and, at each step, can move left, right, up, and down. The primary objective for the agent is to navigate from the starting cell to the goal cell to receive a reward of \(1\), while avoiding a trap cell with a reward of \(-1\).
Also, the environment is noisy; even if the agent decides to move in a specific direction, the environment might, with a certain noise probability, force a move in a direction \(90^\circ\) off the intended path. The rewards received are discounted by a factor of \(0.95\). Finally, the receiver has to decode a message, uniformly chosen from an alphabet of size 1024, given the final trajectory of the agent. Figure~\ref{fig:gwbeta0} illustrates the Grid World used in this experiment and depicts a trajectory taken by the agent. The environment used in the experiments is forked from the implementation of \citet{hanselman_grid_world_rl_2023}.

\begin{figure}[h!]
    \centering
    \includegraphics[width=0.75\linewidth]{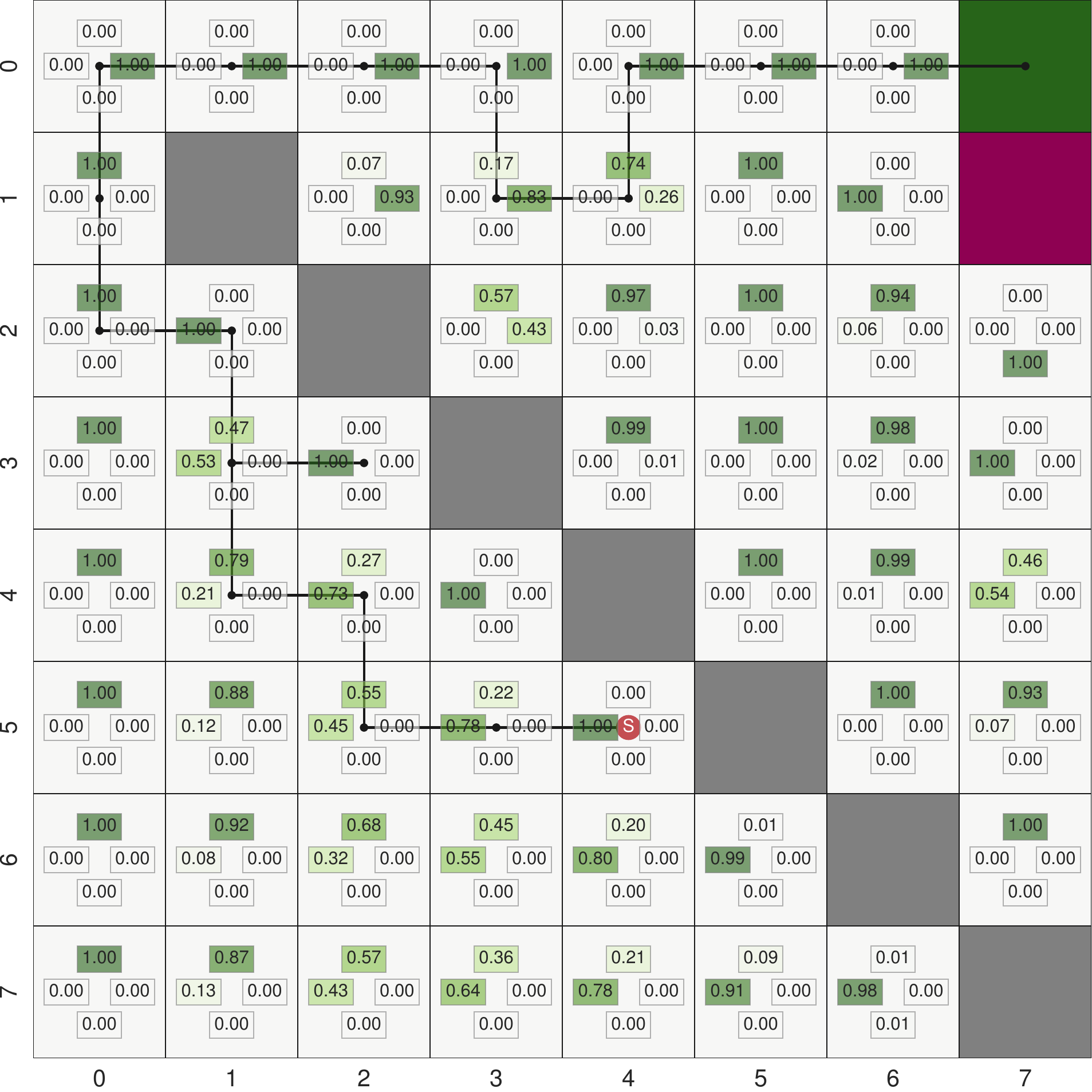}
    \caption{The Grid World Setup used in the experiments. The starting cell is depicted by a red circle, while the goal, trap, and obstacle cells are colored green, red, and grey, respectively. Additionally, a non-deterministic policy is demonstrated through the probabilities of actions in each direction within each cell. The path taken by the agent is traced in black. Note that due to the noisy environment, the agent may move in directions not explicitly suggested by the policy.
    }\label{fig:gwbeta0}
\end{figure}

The marginal policy is learned through Soft Q-Value iteration, as described in Algorithm~\ref{alg:softq}. By increasing the value of \(\beta\) in Equation~\eqref{eq:maxentrl}, we induce more randomness into the marginal policy. Figure~\ref{fig:twobetas} shows two policies learned by high and low values of $\beta$.

\begin{figure}[ht]
    \includegraphics[width=\linewidth]{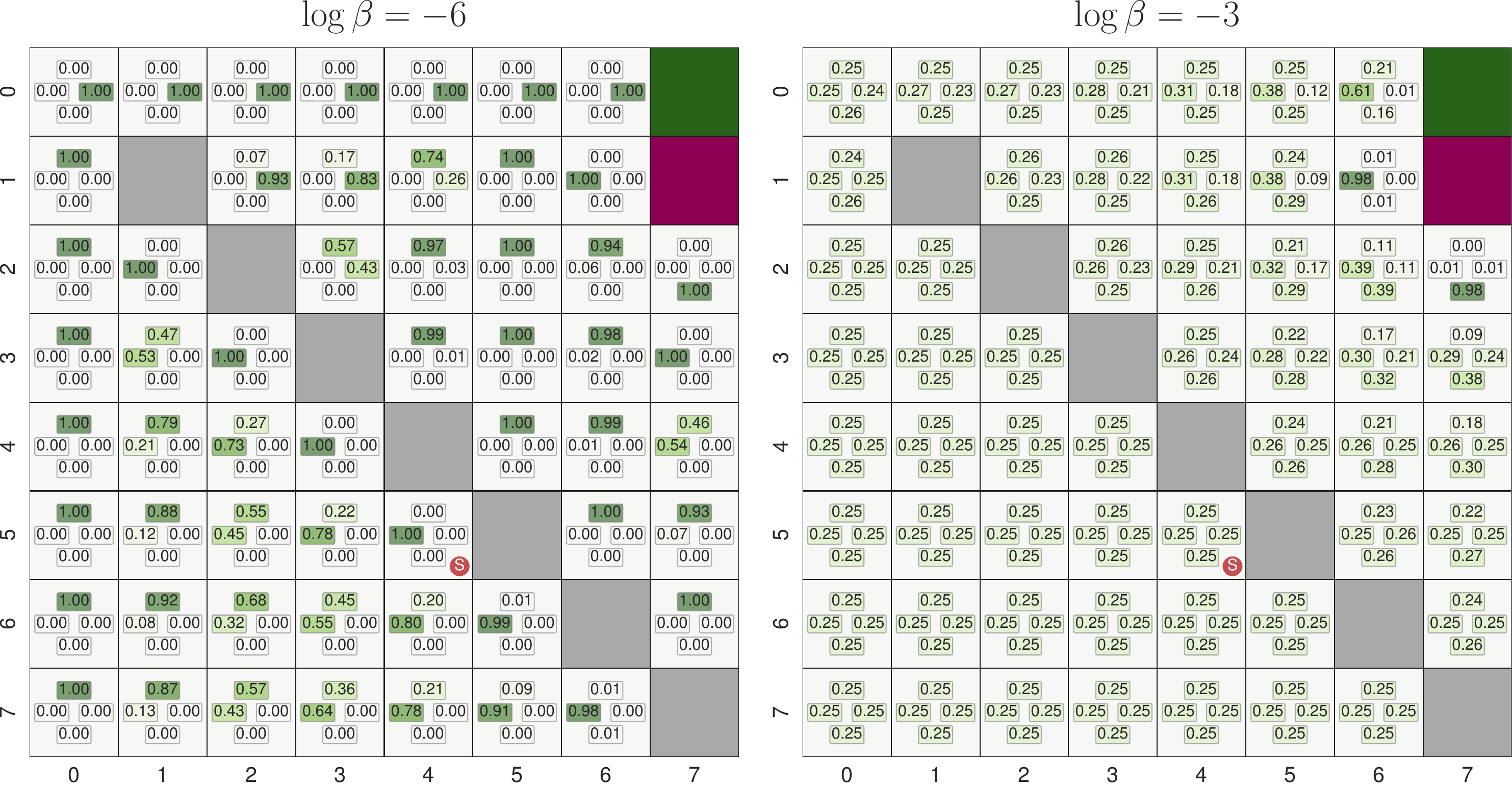}
    \caption{The Maximum Entropy policy learned through Soft Q-Value iteration of Algorithm~\ref{alg:softq}, for $\log\beta=-6$ (left) and $\log\beta=-3$ (right).}\label{fig:twobetas}
\end{figure}
\FloatBarrier

\section{Additional Experimental Results}

\subsection{Deterministic EBIM Solver vs. Shkel et al. (2017)}

As discussed in Section~\ref{sec:relatedworks}, our proposed search method in Algorithm~\ref{alg:search} is compared with the encoder from \citet{shkel2017single}. Our formulation directly imposes an entropy constraint on the code, whereas the encoding scheme by Shkel et al. limits the code by its alphabet size. In their approach, the encoder iterates over all input symbols, assigning each one to a message that has accumulated the smallest total probability up to that point. 

Figure~\ref{fig:yanina} displays the mutual information obtained for each maximum allowed code rate value, considering two different input distributions. As observed, the two methods yield comparable mutual information in the high-rate regime. However, in the low-rate regime, our proposed algorithm identifies more mappings and thus significantly outperforms the encoder described in \cite{shkel2017single}.

\begin{figure}[h!]
    \centering
    \includegraphics[width=.95\linewidth]{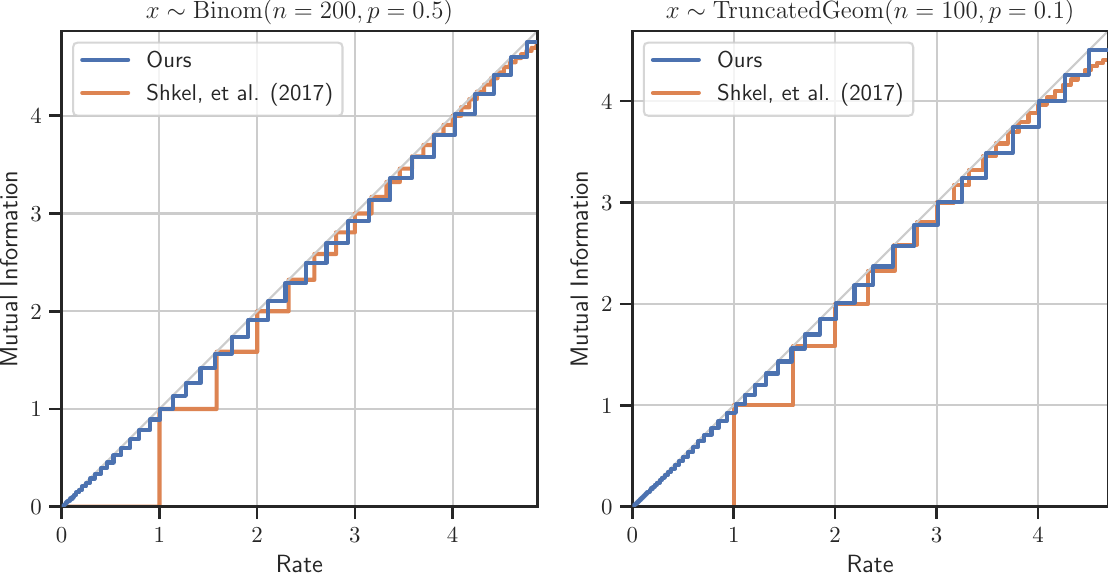}
    \caption{
        Obtained $I(X;T)$ vs. maximum allowed $H(T)$ for Binomial (left) and Truncated Geometric (right) input distributions.
    }
    \label{fig:yanina}
\end{figure}
\FloatBarrier

\subsection{Visualizing Couplings from MEC-B}

As discussed in Section~\ref{sec:introduction}, optimizing the encoder and decoder separately for the Minimum Entropy Coupling with Bottleneck (MEC-B) problem, as outlined in \eqref{eq:mecb}, involves first designing the encoder by solving the Entropy-Bounded Information Maximization (EBIM) in \eqref{eq:obj}. This is followed by optimizing the decoder using Minimum Entropy Coupling (MEC) between the code distribution (derived from the previous step) with the output distribution.

To illustrate the couplings generated, we apply the MEC-B framework to inputs and outputs that are uniformly distributed across an alphabet of size 30. For EBIM, we only search for deterministic mappings using Algorithm~\ref{alg:search}, while for MEC, we employ the max-seeking method outlined in Algorithm~\ref{alg:maxseekMEC}. Figure~\ref{fig:couplingvscomp} illustrates the generated couplings for varying encoder compression rates, defined by the ratio of the entropy of the input $H(X)$ to the allowed code budget $H(T)$. Greater compression rates are observed to lead to larger entropy couplings; moving from completely deterministic mappings to increasingly stochastic ones.

\begin{figure}[hbt]
    \centering
    \includegraphics[width=\linewidth]{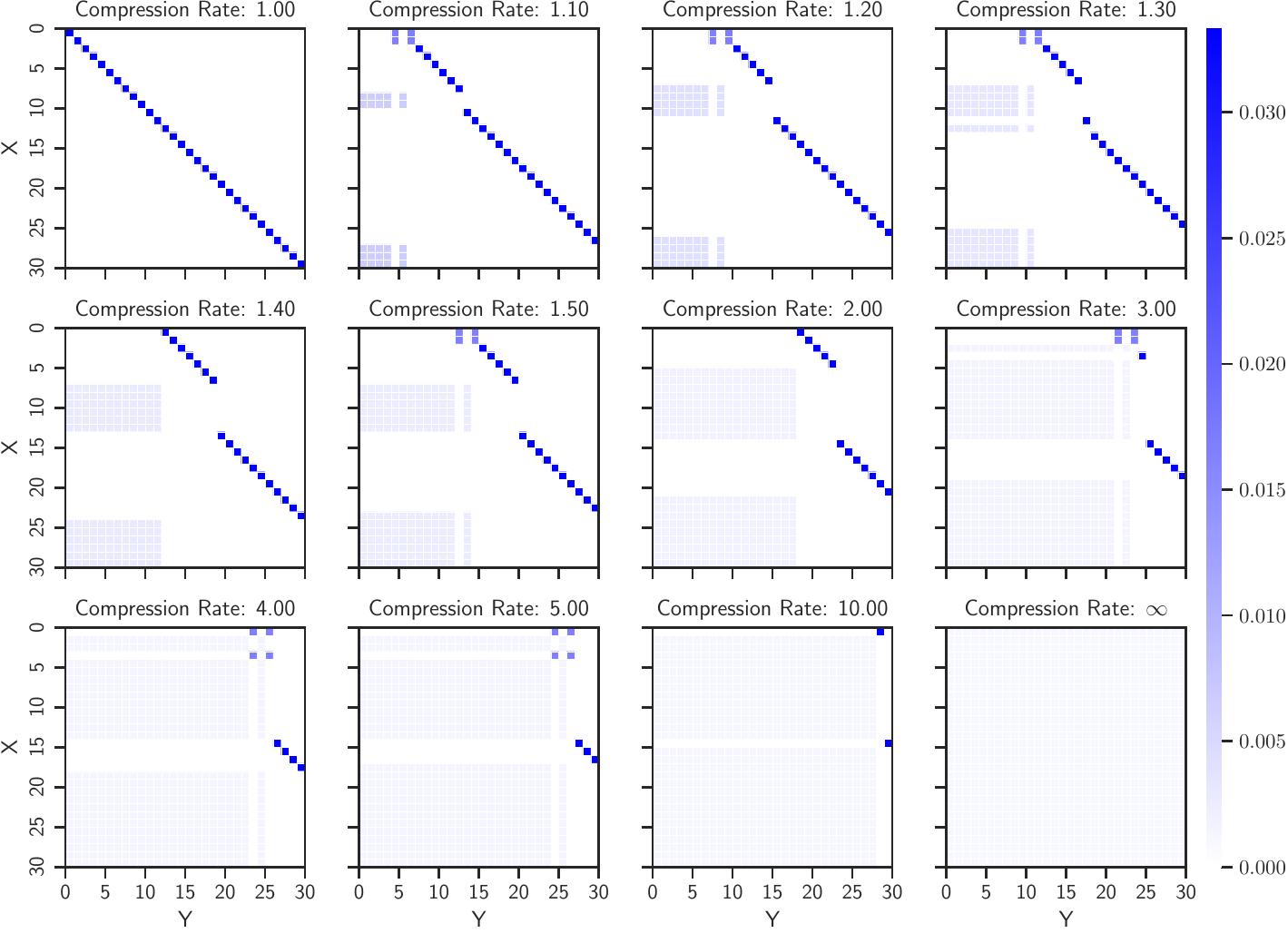}
    \caption{
        Generated couplings in MEC-B formulation \eqref{eq:mecb}, for uniform input and output distributions. The compression rate is defined as $H(X)/R$. Higher compression rates lead to more stochastic couplings with increased entropy.
    }
    \label{fig:couplingvscomp}
\end{figure}
\FloatBarrier

\clearpage
\section{Unsupervised Image Restoration}

This section introduces a preliminary formulation and initial results for a joint image compression and upscaling task, intended to illustrate a practical application of our proposed MEC-B framework and to suggest potential avenues for future research.

We consider two unpaired datasets, $\mathcal{D}_X$ and $\mathcal{D}_Y$, which contain low-resolution and high-resolution images, respectively. Because the datasets are unpaired, there is no direct correspondence between a specific low-resolution image in $\mathcal{D}_X$ and a high-resolution image in $\mathcal{D}_Y$. In this setup, the task requires compressing a low-resolution image \( X \) into a compressed representation \( T \), and then reconstructing an upscaled version \( Y \) from \( T \).

In applying our MEC-B framework in Eq.~\eqref{eq:mecb}, we leverage the Variational Information Maximization approach of \citet{barber2004algorithm}:

\begin{align}
    I(X;Y) &= H(X) - H(X|Y) \\
    &= H(X) + \ex_{y\sim p_Y} \left[\ex_{x \sim p_{X|Y}(\cdot|y)} \left[ \log p_{X|Y}(x|y) \right] \right] \\
    &= H(X) + \ex_{y\sim p_Y} \left[\ex_{x \sim p_{X|Y}(\cdot|y)} \left[ \log q_\gamma(x|y) \right] + D_{\text{KL}} \left( p_{X|Y}(\cdot|y)\, \vert \vert \, q_\gamma(\cdot|y) \right) \right] \\
    &\geq H(X) + \ex_{y\sim p_Y} \left[\ex_{x \sim p_{X|Y}(\cdot|y)} \left[ \log q_\gamma(x|y) \right] \right] .
\end{align}

Using the Lemma 5.1 from \citet{chen2016infogan}, we have:
\begin{align}
    I(X;Y) 
    &\geq H(X) + \ex_{y\sim p_Y} \left[\ex_{x \sim p_{X|Y}(\cdot|y)} \left[ \log q_\gamma(x|y) \right] \right] \\
    &= H(X) + \ex_{x\sim p_X, \hat{y}\sim p_{Y|X}(\cdot,x)} \left[ \log q_\gamma(x|\hat{y}) \right]
\end{align}

In practical terms, by training the network \( q_\gamma \) to model the degradation process \( Y \to X \) and reconstructing \( x \) from \( \hat{y} \), we maximize a lower bound on the mutual information \( I(X;Y) \). Simultaneously, we enforce the output distribution constraint \( p_Y \) via an adversarial loss from a discriminator \( d_\rho \). The total loss is therefore composed of an information loss and an adversarial loss, \( \mathcal{L} = \mathcal{L}_{\text{info}} + \lambda \mathcal{L}_\text{adv} \). Figure~\ref{fig:imagerest} illustrates a block diagram of this framework. 
Note that we use a deterministic encoder \( f_\theta \) that outputs the quantized code \( T \), while the generator remains stochastic due to the addition of noise \( z \).

\begin{figure}[h]
    \centering
    \includegraphics[width=\linewidth]{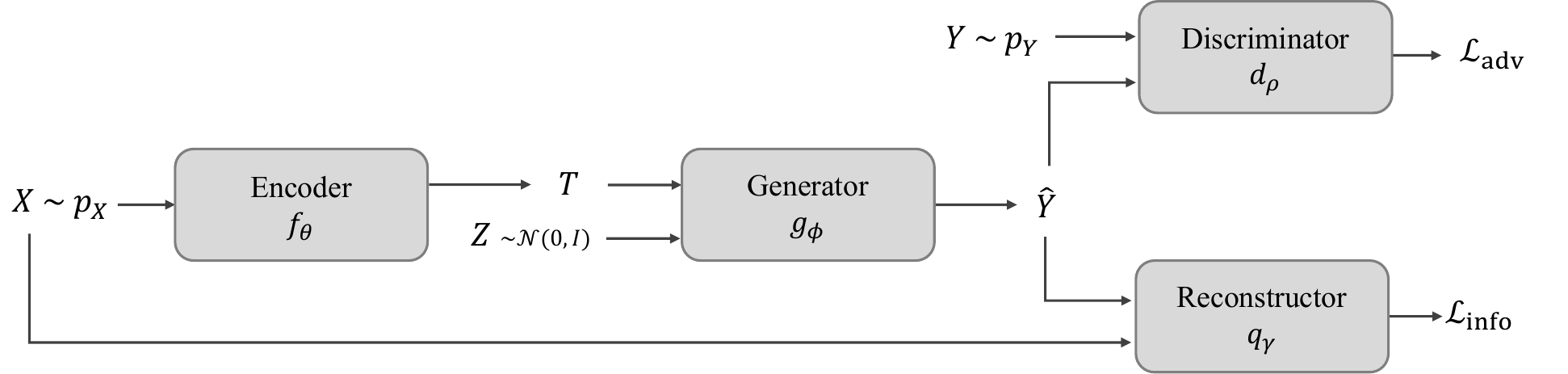}
    \caption{Block diagram of the unsupervised image restoration framework.}
    \label{fig:imagerest}
\end{figure}


Figures~\ref{fig:mnist} and \ref{fig:svhn} present sample output results after training the networks to achieve $4\times$-upscaling of the input images, using the MNIST \cite{mnist} and SVHN \cite{svhn} datasets. Observing Figure~\ref{fig:svhn}, we notice some color discrepancies between the original and upscaled images. This discrepancy highlights an inherent property of mutual information: for any invertible function \( f \), \( I(X;Y) = I(X; f(Y)) \). This implies that the objective function is invariant to feature permutations, including transformations like color rotations in image channels, which can manifest as color distortions in the output. To mitigate such artifacts, careful architectural or design constraints are required to properly address feature permutations.

\begin{figure}[hbt]
    \centering
    \includegraphics[width=\linewidth]{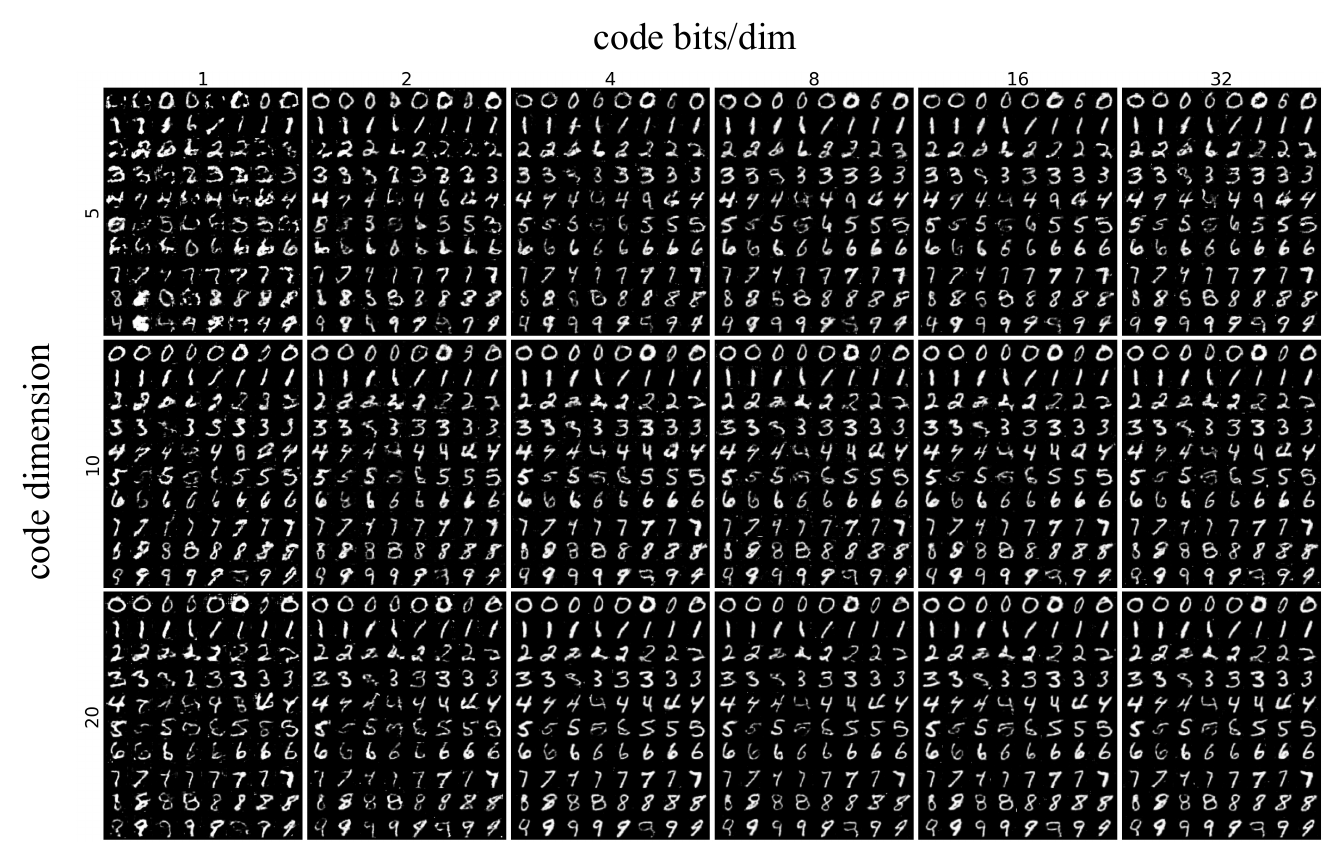}
    \caption{Output samples from the MNIST dataset, for different number of code dimensions and the number of bits per dimension of the code.}
    \label{fig:mnist}
\end{figure}

\begin{figure}[hbt]
    \centering
    \includegraphics[width=\linewidth]{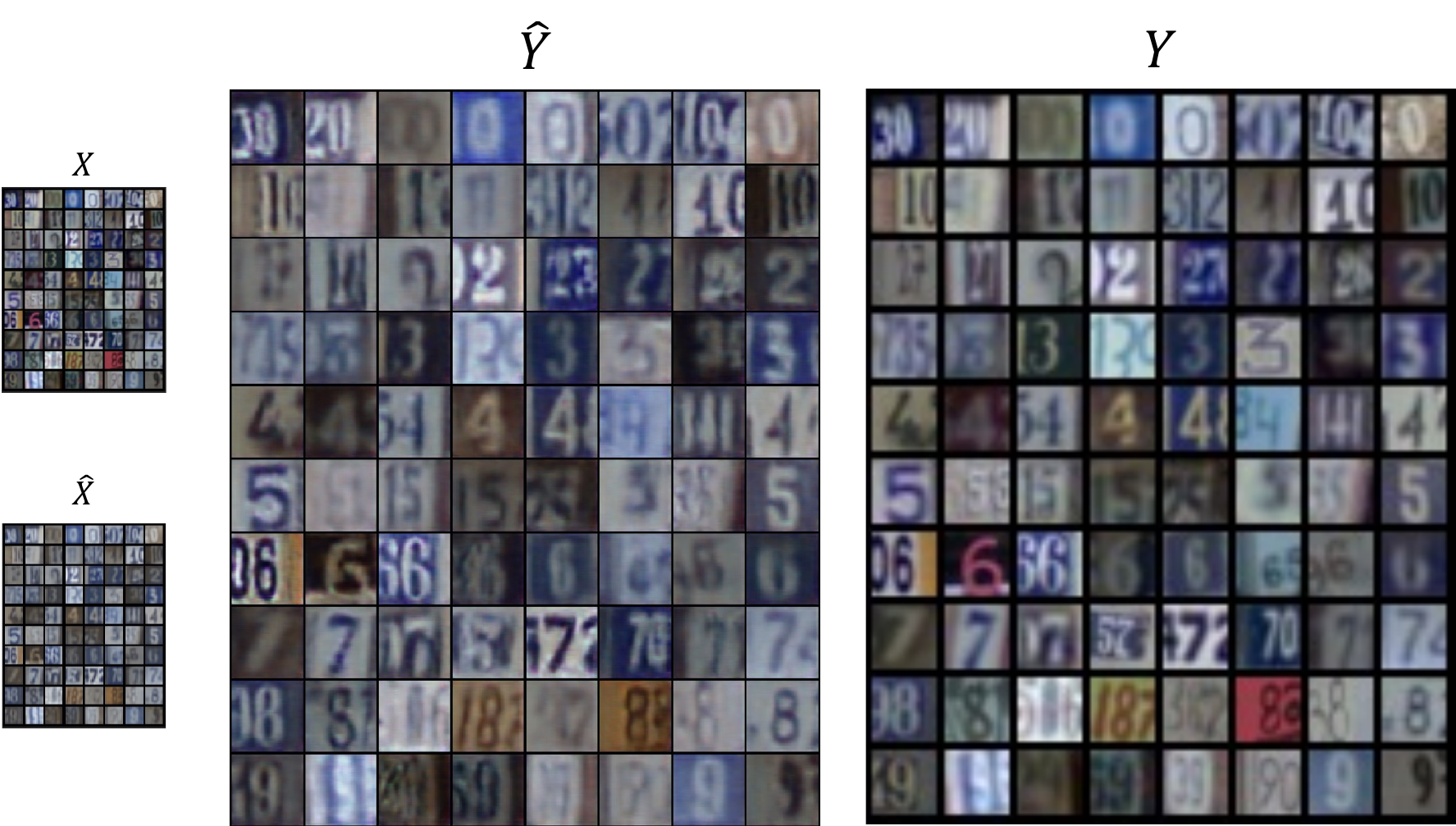}
    \caption{Input and output samples from the SVHN dataset.}
    \label{fig:svhn}
\end{figure}

\clearpage

\FloatBarrier

\newpage
\section*{NeurIPS Paper Checklist}
\begin{enumerate}

\item {\bf Claims}
    \item[] Question: Do the main claims made in the abstract and introduction accurately reflect the paper's contributions and scope?
    \item[] Answer: \answerYes{} 
    \item[] Justification: The abstract and introduction accurately reflect the paper's contributions: the development of a novel lossy compression framework, its decomposition into two problems, and an extensive theoretical study alongside experiments with Markov Coding Games.
    \item[] Guidelines:
    \begin{itemize}
        \item The answer NA means that the abstract and introduction do not include the claims made in the paper.
        \item The abstract and/or introduction should clearly state the claims made, including the contributions made in the paper and important assumptions and limitations. A No or NA answer to this question will not be perceived well by the reviewers. 
        \item The claims made should match theoretical and experimental results, and reflect how much the results can be expected to generalize to other settings. 
        \item It is fine to include aspirational goals as motivation as long as it is clear that these goals are not attained by the paper. 
    \end{itemize}

\item {\bf Limitations}
    \item[] Question: Does the paper discuss the limitations of the work performed by the authors?
    \item[] Answer: \answerYes{} 
    \item[] Justification: Limitations are discussed in the conclusion section.
    \item[] Guidelines:
    \begin{itemize}
        \item The answer NA means that the paper has no limitation while the answer No means that the paper has limitations, but those are not discussed in the paper. 
        \item The authors are encouraged to create a separate "Limitations" section in their paper.
        \item The paper should point out any strong assumptions and how robust the results are to violations of these assumptions (e.g., independence assumptions, noiseless settings, model well-specification, asymptotic approximations only holding locally). The authors should reflect on how these assumptions might be violated in practice and what the implications would be.
        \item The authors should reflect on the scope of the claims made, e.g., if the approach was only tested on a few datasets or with a few runs. In general, empirical results often depend on implicit assumptions, which should be articulated.
        \item The authors should reflect on the factors that influence the performance of the approach. For example, a facial recognition algorithm may perform poorly when image resolution is low or images are taken in low lighting. Or a speech-to-text system might not be used reliably to provide closed captions for online lectures because it fails to handle technical jargon.
        \item The authors should discuss the computational efficiency of the proposed algorithms and how they scale with dataset size.
        \item If applicable, the authors should discuss possible limitations of their approach to address problems of privacy and fairness.
        \item While the authors might fear that complete honesty about limitations might be used by reviewers as grounds for rejection, a worse outcome might be that reviewers discover limitations that aren't acknowledged in the paper. The authors should use their best judgment and recognize that individual actions in favor of transparency play an important role in developing norms that preserve the integrity of the community. Reviewers will be specifically instructed to not penalize honesty concerning limitations.
    \end{itemize}

\item {\bf Theory Assumptions and Proofs}
    \item[] Question: For each theoretical result, does the paper provide the full set of assumptions and a complete (and correct) proof?
    \item[] Answer: \answerYes{} 
    \item[] Justification: Complete proofs of theorems and lemmas are provided in the Appendix, Section~\ref{sec:proofs}.
    \item[] Guidelines:
    \begin{itemize}
        \item The answer NA means that the paper does not include theoretical results. 
        \item All the theorems, formulas, and proofs in the paper should be numbered and cross-referenced.
        \item All assumptions should be clearly stated or referenced in the statement of any theorems.
        \item The proofs can either appear in the main paper or the supplemental material, but if they appear in the supplemental material, the authors are encouraged to provide a short proof sketch to provide intuition. 
        \item Inversely, any informal proof provided in the core of the paper should be complemented by formal proofs provided in appendix or supplemental material.
        \item Theorems and Lemmas that the proof relies upon should be properly referenced. 
    \end{itemize}

    \item {\bf Experimental Result Reproducibility}
    \item[] Question: Does the paper fully disclose all the information needed to reproduce the main experimental results of the paper to the extent that it affects the main claims and/or conclusions of the paper (regardless of whether the code and data are provided or not)?
    \item[] Answer: \answerYes{} 
    \item[] Justification: Algorithms for all methods are clearly presented in the manuscript, ensuring readers can understand and implement the processes discussed. Furthermore, all codes required to reproduce the results are included with this submission.
    \item[] Guidelines:
    \begin{itemize}
        \item The answer NA means that the paper does not include experiments.
        \item If the paper includes experiments, a No answer to this question will not be perceived well by the reviewers: Making the paper reproducible is important, regardless of whether the code and data are provided or not.
        \item If the contribution is a dataset and/or model, the authors should describe the steps taken to make their results reproducible or verifiable. 
        \item Depending on the contribution, reproducibility can be accomplished in various ways. For example, if the contribution is a novel architecture, describing the architecture fully might suffice, or if the contribution is a specific model and empirical evaluation, it may be necessary to either make it possible for others to replicate the model with the same dataset, or provide access to the model. In general. releasing code and data is often one good way to accomplish this, but reproducibility can also be provided via detailed instructions for how to replicate the results, access to a hosted model (e.g., in the case of a large language model), releasing of a model checkpoint, or other means that are appropriate to the research performed.
        \item While NeurIPS does not require releasing code, the conference does require all submissions to provide some reasonable avenue for reproducibility, which may depend on the nature of the contribution. For example
        \begin{enumerate}
            \item If the contribution is primarily a new algorithm, the paper should make it clear how to reproduce that algorithm.
            \item If the contribution is primarily a new model architecture, the paper should describe the architecture clearly and fully.
            \item If the contribution is a new model (e.g., a large language model), then there should either be a way to access this model for reproducing the results or a way to reproduce the model (e.g., with an open-source dataset or instructions for how to construct the dataset).
            \item We recognize that reproducibility may be tricky in some cases, in which case authors are welcome to describe the particular way they provide for reproducibility. In the case of closed-source models, it may be that access to the model is limited in some way (e.g., to registered users), but it should be possible for other researchers to have some path to reproducing or verifying the results.
        \end{enumerate}
    \end{itemize}

\item {\bf Open access to data and code}
    \item[] Question: Does the paper provide open access to the data and code, with sufficient instructions to faithfully reproduce the main experimental results, as described in supplemental material?
    \item[] Answer: \answerYes{} 
    \item[] Justification: The submission includes the codes used to generate the results presented in the paper. Additionally, a \texttt{README.md} file accompanies these codes, offering detailed instructions on how to execute them, along with an example to guide users.
    \item[] Guidelines:
    \begin{itemize}
        \item The answer NA means that paper does not include experiments requiring code.
        \item Please see the NeurIPS code and data submission guidelines (\url{https://nips.cc/public/guides/CodeSubmissionPolicy}) for more details.
        \item While we encourage the release of code and data, we understand that this might not be possible, so “No” is an acceptable answer. Papers cannot be rejected simply for not including code, unless this is central to the contribution (e.g., for a new open-source benchmark).
        \item The instructions should contain the exact command and environment needed to run to reproduce the results. See the NeurIPS code and data submission guidelines (\url{https://nips.cc/public/guides/CodeSubmissionPolicy}) for more details.
        \item The authors should provide instructions on data access and preparation, including how to access the raw data, preprocessed data, intermediate data, and generated data, etc.
        \item The authors should provide scripts to reproduce all experimental results for the new proposed method and baselines. If only a subset of experiments are reproducible, they should state which ones are omitted from the script and why.
        \item At submission time, to preserve anonymity, the authors should release anonymized versions (if applicable).
        \item Providing as much information as possible in supplemental material (appended to the paper) is recommended, but including URLs to data and code is permitted.
    \end{itemize}

\item {\bf Experimental Setting/Details}
    \item[] Question: Does the paper specify all the training and test details (e.g., data splits, hyperparameters, how they were chosen, type of optimizer, etc.) necessary to understand the results?
    \item[] Answer: \answerYes{} 
    \item[] Justification: The Appendix Section~\ref{sec:gridworld} provides detailed information and examples on the setup used for the experiment section. 
    \item[] Guidelines:
    \begin{itemize}
        \item The answer NA means that the paper does not include experiments.
        \item The experimental setting should be presented in the core of the paper to a level of detail that is necessary to appreciate the results and make sense of them.
        \item The full details can be provided either with the code, in appendix, or as supplemental material.
    \end{itemize}

\item {\bf Experiment Statistical Significance}
    \item[] Question: Does the paper report error bars suitably and correctly defined or other appropriate information about the statistical significance of the experiments?
    \item[] Answer: \answerNo{} 
    \item[] Justification: Figure~\ref{fig:tradeoff} compares the average MDP reward and receiver accuracy, averaged over 200 episodes. The experiment demonstrates the trade-off between the two factors by varying the parameter \(\beta\). Since the focus is on illustrating this specific relationship, variations due to exogenous parameters such as train/test splits, initialization, or random parameter settings are not relevant and thus not displayed.
    \item[] Guidelines:
    \begin{itemize}
        \item The answer NA means that the paper does not include experiments.
        \item The authors should answer "Yes" if the results are accompanied by error bars, confidence intervals, or statistical significance tests, at least for the experiments that support the main claims of the paper.
        \item The factors of variability that the error bars are capturing should be clearly stated (for example, train/test split, initialization, random drawing of some parameter, or overall run with given experimental conditions).
        \item The method for calculating the error bars should be explained (closed form formula, call to a library function, bootstrap, etc.)
        \item The assumptions made should be given (e.g., Normally distributed errors).
        \item It should be clear whether the error bar is the standard deviation or the standard error of the mean.
        \item It is OK to report 1-sigma error bars, but one should state it. The authors should preferably report a 2-sigma error bar than state that they have a 96\% CI, if the hypothesis of Normality of errors is not verified.
        \item For asymmetric distributions, the authors should be careful not to show in tables or figures symmetric error bars that would yield results that are out of range (e.g. negative error rates).
        \item If error bars are reported in tables or plots, The authors should explain in the text how they were calculated and reference the corresponding figures or tables in the text.
    \end{itemize}

\item {\bf Experiments Compute Resources}
    \item[] Question: For each experiment, does the paper provide sufficient information on the computer resources (type of compute workers, memory, time of execution) needed to reproduce the experiments?
    \item[] Answer: \answerYes{} 
    \item[] Justification: The experiments performed in the paper are not computationally demanding. The runtime for the experiments on Markov Coding Games is mentioned in the Appendix Section~\ref{sec:gridworld}.
    \item[] Guidelines:
    \begin{itemize}
        \item The answer NA means that the paper does not include experiments.
        \item The paper should indicate the type of compute workers CPU or GPU, internal cluster, or cloud provider, including relevant memory and storage.
        \item The paper should provide the amount of compute required for each of the individual experimental runs as well as estimate the total compute. 
        \item The paper should disclose whether the full research project required more compute than the experiments reported in the paper (e.g., preliminary or failed experiments that didn't make it into the paper). 
    \end{itemize}
    
\item {\bf Code Of Ethics}
    \item[] Question: Does the research conducted in the paper conform, in every respect, with the NeurIPS Code of Ethics \url{https://neurips.cc/public/EthicsGuidelines}?
    \item[] Answer: \answerYes{} 
    \item[] Justification: The research conducted in the paper conforms with the NeurIPS Code of Ethics.
    \item[] Guidelines:
    \begin{itemize}
        \item The answer NA means that the authors have not reviewed the NeurIPS Code of Ethics.
        \item If the authors answer No, they should explain the special circumstances that require a deviation from the Code of Ethics.
        \item The authors should make sure to preserve anonymity (e.g., if there is a special consideration due to laws or regulations in their jurisdiction).
    \end{itemize}

\item {\bf Broader Impacts}
    \item[] Question: Does the paper discuss both potential positive societal impacts and negative societal impacts of the work performed?
    \item[] Answer: \answerNA{} 
    \item[] Justification: The paper focuses on the theoretical analysis of a novel lossy compression framework specifically for discrete alphabets and does not extend to public-facing applications at this stage.  As the method developed is a foundational piece of research, it is not yet applicable to scenarios where societal impacts—positive or negative—could be evaluated. Therefore, discussions on societal impacts such as fairness, privacy, or security considerations are not applicable at this point.
    \item[] Guidelines:
    \begin{itemize}
        \item The answer NA means that there is no societal impact of the work performed.
        \item If the authors answer NA or No, they should explain why their work has no societal impact or why the paper does not address societal impact.
        \item Examples of negative societal impacts include potential malicious or unintended uses (e.g., disinformation, generating fake profiles, surveillance), fairness considerations (e.g., deployment of technologies that could make decisions that unfairly impact specific groups), privacy considerations, and security considerations.
        \item The conference expects that many papers will be foundational research and not tied to particular applications, let alone deployments. However, if there is a direct path to any negative applications, the authors should point it out. For example, it is legitimate to point out that an improvement in the quality of generative models could be used to generate deepfakes for disinformation. On the other hand, it is not needed to point out that a generic algorithm for optimizing neural networks could enable people to train models that generate Deepfakes faster.
        \item The authors should consider possible harms that could arise when the technology is being used as intended and functioning correctly, harms that could arise when the technology is being used as intended but gives incorrect results, and harms following from (intentional or unintentional) misuse of the technology.
        \item If there are negative societal impacts, the authors could also discuss possible mitigation strategies (e.g., gated release of models, providing defenses in addition to attacks, mechanisms for monitoring misuse, mechanisms to monitor how a system learns from feedback over time, improving the efficiency and accessibility of ML).
    \end{itemize}
    
\item {\bf Safeguards}
    \item[] Question: Does the paper describe safeguards that have been put in place for responsible release of data or models that have a high risk for misuse (e.g., pretrained language models, image generators, or scraped datasets)?
    \item[] Answer: \answerNA{} 
    \item[] Justification: The paper is focused on the theoretical aspects of a novel lossy compression framework and does not involve the release of data or models that carry a high risk for misuse. There are no pretrained models, image generators, or scraped datasets associated with this research that would necessitate safeguards for responsible release.
    \item[] Guidelines:
    \begin{itemize}
        \item The answer NA means that the paper poses no such risks.
        \item Released models that have a high risk for misuse or dual-use should be released with necessary safeguards to allow for controlled use of the model, for example by requiring that users adhere to usage guidelines or restrictions to access the model or implementing safety filters. 
        \item Datasets that have been scraped from the Internet could pose safety risks. The authors should describe how they avoided releasing unsafe images.
        \item We recognize that providing effective safeguards is challenging, and many papers do not require this, but we encourage authors to take this into account and make a best faith effort.
    \end{itemize}

\item {\bf Licenses for existing assets}
    \item[] Question: Are the creators or original owners of assets (e.g., code, data, models), used in the paper, properly credited and are the license and terms of use explicitly mentioned and properly respected?
    \item[] Answer: \answerYes{} 
    \item[] Justification: The paper properly credits the creators and original owners of all assets used, including codes.
    \item[] Guidelines:
    \begin{itemize}
        \item The answer NA means that the paper does not use existing assets.
        \item The authors should cite the original paper that produced the code package or dataset.
        \item The authors should state which version of the asset is used and, if possible, include a URL.
        \item The name of the license (e.g., CC-BY 4.0) should be included for each asset.
        \item For scraped data from a particular source (e.g., website), the copyright and terms of service of that source should be provided.
        \item If assets are released, the license, copyright information, and terms of use in the package should be provided. For popular datasets, \url{paperswithcode.com/datasets} has curated licenses for some datasets. Their licensing guide can help determine the license of a dataset.
        \item For existing datasets that are re-packaged, both the original license and the license of the derived asset (if it has changed) should be provided.
        \item If this information is not available online, the authors are encouraged to reach out to the asset's creators.
    \end{itemize}

\item {\bf New Assets}
    \item[] Question: Are new assets introduced in the paper well documented and is the documentation provided alongside the assets?
    \item[] Answer: \answerYes{} 
    \item[] Justification: The submission includes the codes that implement the methods presented in the paper. Instructions on how to use the codes are also provided.
    \item[] Guidelines:
    \begin{itemize}
        \item The answer NA means that the paper does not release new assets.
        \item Researchers should communicate the details of the dataset/code/model as part of their submissions via structured templates. This includes details about training, license, limitations, etc. 
        \item The paper should discuss whether and how consent was obtained from people whose asset is used.
        \item At submission time, remember to anonymize your assets (if applicable). You can either create an anonymized URL or include an anonymized zip file.
    \end{itemize}

\item {\bf Crowdsourcing and Research with Human Subjects}
    \item[] Question: For crowdsourcing experiments and research with human subjects, does the paper include the full text of instructions given to participants and screenshots, if applicable, as well as details about compensation (if any)? 
    \item[] Answer: \answerNA{} 
    \item[] Justification: This research did not involve any crowdsourcing experiments or studies with human subjects.
    \item[] Guidelines:
    \begin{itemize}
        \item The answer NA means that the paper does not involve crowdsourcing nor research with human subjects.
        \item Including this information in the supplemental material is fine, but if the main contribution of the paper involves human subjects, then as much detail as possible should be included in the main paper. 
        \item According to the NeurIPS Code of Ethics, workers involved in data collection, curation, or other labor should be paid at least the minimum wage in the country of the data collector. 
    \end{itemize}

\item {\bf Institutional Review Board (IRB) Approvals or Equivalent for Research with Human Subjects}
    \item[] Question: Does the paper describe potential risks incurred by study participants, whether such risks were disclosed to the subjects, and whether Institutional Review Board (IRB) approvals (or an equivalent approval/review based on the requirements of your country or institution) were obtained?
    \item[] Answer: \answerNA{} 
    \item[] Justification: This research did not involve studies with human subjects.
    \item[] Guidelines:
    \begin{itemize}
        \item The answer NA means that the paper does not involve crowdsourcing nor research with human subjects.
        \item Depending on the country in which research is conducted, IRB approval (or equivalent) may be required for any human subjects research. If you obtained IRB approval, you should clearly state this in the paper. 
        \item We recognize that the procedures for this may vary significantly between institutions and locations, and we expect authors to adhere to the NeurIPS Code of Ethics and the guidelines for their institution. 
        \item For initial submissions, do not include any information that would break anonymity (if applicable), such as the institution conducting the review.
    \end{itemize}

\end{enumerate}

\end{document}